\documentclass[letterpaper]{article} %DO NOT CHANGE THIS

\pdfoutput=1

\PassOptionsToPackage{numbers, compress, sort}{natbib}

\usepackage{times}  %Required
\usepackage{helvet}  %Required
\usepackage{courier}  %Required
\usepackage{url}  %Required
\usepackage{graphicx}  %Required
\usepackage{natbib}
\frenchspacing  %Required
\setlength{\pdfpagewidth}{8.5in}  %Required
\setlength{\pdfpageheight}{11in}  %Required

\usepackage[utf8]{inputenc} % allow utf-8 input
\usepackage[T1]{fontenc}    % use 8-bit T1 fonts
\usepackage{hyperref}       % hyperlinks
\usepackage{url}            % simple URL typesetting
\usepackage{booktabs}       % professional-quality tables
\usepackage{amsfonts}       % blackboard math symbols
\usepackage{amsmath}  % Define \boldsymbol (in amsbsy too) and align
\usepackage{amsthm}
\usepackage{amssymb}  % Define \mathbb (in amsfonts too)
\usepackage{nicefrac}       % compact symbols for 1/2, etc.
\usepackage{microtype}      % microtypography
\usepackage{bbm}
\usepackage{tikz}
\usetikzlibrary{arrows}
\usetikzlibrary{positioning}
\tikzset{
  treenode/.style = {align=center, inner sep=0pt, text centered,
    font=\sffamily},
  arn_n/.style = {treenode, circle, black, font=\sffamily\bfseries, draw=black,
    fill=white, text width=1.5em},% arbre rouge noir, noeud noir
  arn_r/.style = {treenode, circle, black, font=\sffamily\bfseries, draw=black,
    fill=white, text width=1.0em},% arbre rouge noir, noeud rouge
  arn_x/.style = {treenode, rectangle, draw=black,
    minimum width=0.5em, minimum height=0.5em}% arbre rouge noir, nil
}
\usepackage{thmtools,thm-restate}
\usepackage{parskip}
\usepackage{authblk}
\usepackage[linesnumbered,ruled,noend]{algorithm2e}
\usepackage{algorithmic}
\usepackage{mathtools}

%Allowing tables to go next to figures
\usepackage{floatrow}
% Table float box with bottom caption, box width adjusted to content
\newfloatcommand{capbtabbox}{table}[][\FBwidth]

% Macros used in the paper
\usepackage{basic}

\newtheorem{lemma}{Lemma}

\newtheorem{definition}{Definition}

%% OPERATORS:

\newcommand{\argmin}[2]{\textrm{argmin}_{#1}~#2}

% Inner product

% Trace

% Expected value
\newcommand{\E}[2]{\mathbb{E}_{#1}\left[#2\right]}
% Sample mean
\newcommand{\Ehat}[1]{\hat{\mathbb{E}}\left[#1\right]}
% Variance
\newcommand{\Var}[2]{\textrm{Var}_{#1}\left[#2\right]}
% Covariance

% Indicator
\newcommand{\ind}[1]{\mathbbm{1}\left\{#1\right\}}

% sech

% diag

% supp
\newcommand{\supp}{\text{supp}}
% independent
\newcommand\independent{\protect\mathpalette{\protect\independenT}{\perp}}
\def\independenT#1#2{\mathrel{\rlap{$#1#2$}\mkern2mu{#1#2}}}
% sign
\newcommand{\sgn}[0]{\textrm{sgn}}
% labeling function
\newcommand{\lf}[0]{\lambda}
% extended labeling function
\newcommand{\lfbar}[0]{\bar{\lambda}}
% estimated probability
\newcommand{\phat}[0]{\hat{p}}
\newcommand{\bmin}[0]{b_{\text{min}}}
\newcommand{\NN}[0]{\text{NN}}

%% variables
\newcommand{\D}[0]{\mathcal{D}}

\newcommand{\X}[0]{\mathcal{X}}
\newcommand{\Y}[0]{\mathcal{Y}}
\newcommand{\p}[0]{\mathcal{P}}

\newcommand{\C}[0]{\mathcal{C}}

\newcommand{\Z}[0]{\mathcal{Z}}

\newcommand{\sysname}{\textsc{Liger}}

%\newenvironment{bsmallmatrix}
%  {\left[\begin{smallmatrix}}
%  {\end{smallmatrix}\right]}

% see line at top of main file to show/hide notes
\ifdefined\ShowNotes
  \newcommand{\colornote}[3]{{\color{#1}\bf{#2 #3}\normalfont}}
\else
  \newcommand{\colornote}[3]{}
\fi

\definecolor{darkred}{rgb}{0.7,0.1,0.1}
\definecolor{darkgreen}{rgb}{0.1,0.5,0.1}
\definecolor{cyan}{rgb}{0.7,0.0,0.7}
\definecolor{dblue}{rgb}{0.2,0.2,0.8}
\definecolor{maroon}{rgb}{0.76,.13,.28}
\definecolor{burntorange}{rgb}{0.81,.33,0}
\definecolor{royalpurple}{rgb}{0.47,.31,0.66}

\newcommand{\spam}{\textbf{Spam}}
\newcommand{\spouse}{\textbf{Spouse}}
\newcommand{\weather}{\textbf{Weather}}

\newcommand{\commercial}{\textbf{Commercial}}
\newcommand{\tennis}{\textbf{Tennis}}
\newcommand{\basketball}{\textbf{Basketball}}

% not a colornote since we don't want these to ever be removed from the document
\ifdefined\ShowNotes
  \newcommand{\num}[1]{{\color{red}\bf{#1}\normalfont}}
\else
  \newcommand{\num}[1]{#1}
\fi

% Headers

\usepackage{etoolbox}

\makeatletter
\newcommand{\changeoperator}[1]{%
  \csletcs{#1@saved}{#1@}%
  \csdef{#1@}{\changed@operator{#1}}%
}
\newcommand{\changed@operator}[1]{%
  \mathop{%
    \mathchoice{\textstyle\csuse{#1@saved}}
               {\csuse{#1@saved}}
               {\csuse{#1@saved}}
               {\csuse{#1@saved}}%
  }%
}
\makeatother

\setlength{\textfloatsep}{25.4pt plus 2.4pt minus 4.8pt} 

\let\oldnl\nl% Store \nl in \oldnl
\newcommand{\nonl}{\renewcommand{\nl}{\let\nl\oldnl}}% Remove line number for one line

\newif\ifsinglecolumn
\singlecolumntrue

\ifsinglecolumn
\title{Shoring Up the Foundations: \\ Fusing Model Embeddings and Weak Supervision}
\else
\title{Shoring Up the Foundations: Fusing Model Embeddings and Weak Supervision}
\fi

\author[1]{Mayee~F.~Chen$^*$}
\author[1]{Daniel~Y.~Fu\thanks{Equal Contribution. A preliminary version of the results in this paper can be found at \url{https://arxiv.org/abs/2006.15168}.}}
\author[2]{Dyah~Adila}
\author[1]{Michael~Zhang}
\author[2]{Frederic~Sala}
\author[1]{Kayvon~Fatahalian}
\author[1]{Christopher~R\'e}

\affil[1]{Department of Computer Science, Stanford University}
\affil[2]{Department of Computer Science, University of Wisconsin-Madison}
\affil[1]{\footnotesize{\texttt{\{mfchen, danfu, mzhang, kayvonf, chrismre\}@cs.stanford.edu}}}
\affil[2]{\footnotesize{\texttt{\{adila, fredsala\}@cs.wisc.edu}}}

\begin{document}

\maketitle

\begin{abstract}
  %!TEX root = ../main.tex

Foundation models offer an exciting new paradigm for constructing models with out-of-the-box embeddings and a few labeled examples.
However, it is not clear how to best apply foundation models without labeled data.
A potential approach is to fuse foundation models with weak supervision frameworks, which use weak label sources---pre-trained models, heuristics, crowd-workers---to construct pseudolabels.
The challenge is building a combination that best exploits the signal available in both foundation models and weak sources.
We propose \sysname, a combination that uses foundation model embeddings to improve two crucial elements of existing weak supervision techniques.
First, we produce finer estimates of weak source quality by partitioning the embedding space and learning per-part source accuracies.
Second, we improve source coverage by extending source votes in embedding space.
Despite the black-box nature of foundation models,
we prove results characterizing how our approach improves performance and show that lift scales with the smoothness of label distributions in embedding space.
On six benchmark NLP and video tasks,
\sysname\ outperforms vanilla weak supervision by \num{14.1} points, weakly-supervised kNN and adapters by \num{11.8} points, and kNN and adapters supervised by traditional hand labels by \num{7.2} points.

\end{abstract}

%!TEX root = ../main.tex

\section{Introduction}
\label{sec:intro}

\begin{figure*}[t]
    \includegraphics[width=6.5in]{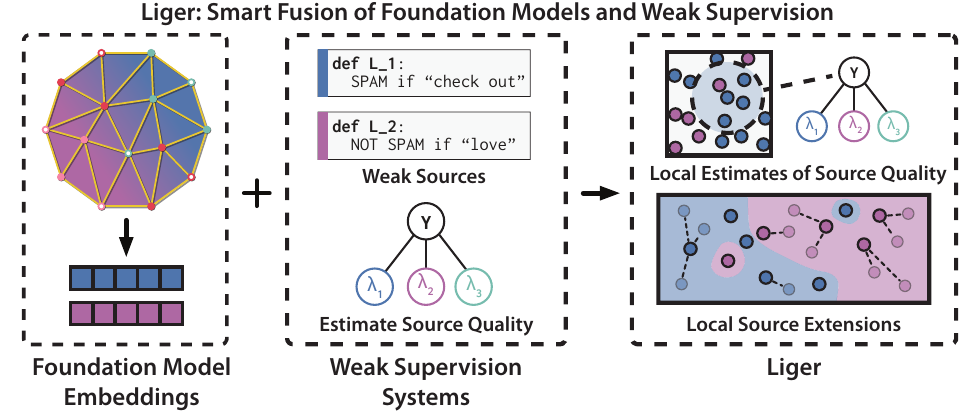}
    \centering
    \caption{
      \sysname\ fuses embeddings from foundation models (left) with weak supervision (middle) by exploiting local smoothness of the representations (right).
      \sysname\ uses the embeddings to a) produce more accurate local estimates of weak source quality (right, top), and b) to locally extend weak sources, improving their coverage (right, bottom).
    }
    \label{fig:banner}
  \end{figure*}

Foundation models---large pretrained models such as GPT-3, BERT, CLIP, and DALL-E \citep{brown2020language, devlin-2019-bert, radford2021learning, dalle}---offer powerful representations that can be used in a broad array of settings~\citep{Bommasani21FM}. 
These models have achieved state-of-the-art performance on many tasks.
However, it remains unclear how to best apply foundation models in situations where users lack access to any labeled data but do have some weak signals.
These are the cases where another class of techniques---weak supervision \citep{Ratner18, fu2020fast}---shines.

The broad success of foundation models (FMs) suggests that fusing them with weak supervision may offer substantial benefits.
Intuitively, the signals present in both can be used to replace large amounts of hand-labeled data in supervised learning. These signals are complementary.
Foundation models are trained on huge amounts of data and thus offer powerful general-purpose embeddings.
Weak supervision frameworks rely on multiple weak sources of signal that can be synthesized into pseudolabels for downstream training.
These weak sources typically express specialized domain expertise. 
The fusion may enable each component to be improved: FM embeddings can be used without labeled data, while weak sources may be extended to be more general-purpose.

Our goal is to combine these complementary signals to address two challenges in existing approaches to weak supervision.
The first challenge is performing fine-grained estimation of source quality.
Current weak supervision approaches typically coarsely model source quality by assuming error distributions are uniform over unlabeled points~\citep{ratner2019training, fu2020fast}, but source quality may vary across points in actuality.
The second challenge is producing votes on points where sources abstain. 
Weak sources often abstain, so that current approaches suffer from low coverage and have many points lacking any signal. 
We seek to exploit the powerful embeddings from FMs---and the geometry induced by them---to address these challenges.

We propose \sysname, a new weak supervision approach based on the notion of \emph{local} quality of weak sources in the FM embedding space (named after a well-known fusion of powerful animals).
We introduce an efficient algorithm that partitions the embedding space and learns per-part local source accuracies. 
\sysname\ also extends weak sources into nearby regions of the embedding space that they previously abstained on, improving coverage.
Despite the fact that FMs are typically black-box, our localized approach exploits a simple measurable notion of their signal: the smoothness of the label distribution in the embedding space.
When the distribution of label values does not vary significantly over an embedding region, local source accuracies can be estimated well, and local source extensions maintain their accuracy.
We introduce generalization error bounds that individually characterize the impact of partitioning and extending.
These error bounds scale in the embedding smoothness and involve a bias-variance tradeoff in 
the number of partitions and the radii that specify extensions, suggesting that careful incorporation of the FM's signal into our approach is necessary.

We evaluate \sysname\ on six benchmark NLP and video weak supervision tasks, fusing weak sources with GPT-3 embeddings~\citep{brown2020language, neelakantan2022text} for the NLP tasks, and with image embeddings from CLIP~\citep{radford2021learning} for the video tasks.
We compare \sysname\ against using FMs or weak supervision on their own, as well as baseline techniques for fusing them together.
First, \sysname\ outperforms two strong baselines for traditional supervision of FMs, kNN and adapters~\citep{houlsby2019parameter}, by \num{7.2} points, and outperforms traditional weak supervision by \num{14.1} points.
Next, \sysname\ outperforms kNN or adapter-based fusions of weak supervision and FMs by \num{11.8} points.
We find that lift scales with embedding smoothness---confirming our theoretical findings.
We measure the smoothness of CLIP embeddings against BiT-M~\citep{kolesnikov2020big}, ResNet-101 embeddings pretrained on ImageNet~\citep{ILSVRC15}, and raw pixels on a video task.
We find that CLIP embeddings are smoothest and result in the best performance.
Similarly, we find that using the right prompt for GPT-3 has a strong effect on smoothness and performance on a relation extraction task.

In summary, we contribute:
\begin{itemize}[leftmargin=*]
    \item \sysname, a new approach for fusing foundation models with weak supervision by exploiting local smoothness of labels and weak sources in embedding space.
    \item Finite-sample generalization error bounds of our algorithm that scale in this smoothness.
    \item Evaluation of \sysname\ on six benchmark NLP and video weak supervision tasks, where \sysname\ outperforms simple fusions of foundation models and weak supervison, as well as either on its own.
\end{itemize}

%!TEX root = ../main.tex

\section{Background}
\label{sec:background}

We describe the problem setting for weak supervision (Section~\ref{sec:setup}). We introduce two general challenges in weak supervision that our approach using foundation model embeddings can mitigate.
We then propose a model and explain its two stages---source quality estimation and pseudolabel inference (Section~\ref{sec:lm}). 
We provide a brief background on the estimation technique from~\cite{fu2020fast}, on top of which we build our approach.

\subsection{Problem Setup} \label{sec:setup}
Our goal is to predict label $y \in \Y = \{-1,+1\}$ from datapoints $x \in \X$. If we had access to pairs $(x, y)$, we could train a supervised model. However, we do not have access to any samples of $y$; instead, we observe $m$ \emph{weak sources} $\bm{\lf} = \{\lf_1, \dots, \lf_m\}$, each
voting or abstaining on each point $x$ via a probabilistic \textit{labeling function}
$\lf_j: \X \rightarrow \Y \cup \{0\}$ for all $j \in [m]$. 
We refer to $\lf_j(x) = 0$ as an abstain, which occurs when a source is uncertain or not applicable on a point. 

We also have access to FM embeddings. These embeddings are the outputs of a mapping $f: \X \rightarrow \Z$ from input space to an embedding space $\Z$ equipped with metric $\rho: \Z \times \Z \rightarrow \mathbb{R}^+$. 
This mapping is fixed and obtained from an off-the-shelf model.
Overall, we have an unlabeled dataset $\D = \{x_i\}_{i = 1}^n$ of $n$ i.i.d. points, as well as access to $m$ weak sources and the embedding map $f$. 

Given an input $x$ and $\bm{\lf}(x)$, we aim to learn a \emph{label model} that predicts $y$ by estimating $\hat{\Pr}(y | \bm{\lf}, x)$ (we drop the $x$ in $\bm{\lf}(x)$ when obvious). The goal of the label model is to combine sources based on their individual accuracies (i.e. $\lf_i$'s rate of agreement with $y$) by upweighting high-quality sources and downweighting low-quality ones. The resulting pseudolabels given by $\hat{\Pr}(y | \bm{\lf}, x)$ can be used to train a downstream supervised \emph{end model} or used just directly as predictions. The latter case is often ideal, since users need not train an additional model. We focus on this setting.

\textbf{Two Challenges and Opportunities.} 
Next, we describe two challenges common to weak supervision techniques. Fusing weak supervision with FM embeddings presents opportunities to mitigate these challenges.
\begin{itemize}[itemsep=0.5pt,topsep=0pt,leftmargin=*]
\item {\bf Coarse Accuracy Modeling.} The most common assumption in weak supervision is to model $\hat{\Pr}(y | \bm{\lf}, x)$ as $\hat{\Pr}(y | \bm{\lf})$. That is, conditioned on the weak sources, the true label is viewed as independent of the features, so only one set of accuracies is learned over the data. Removing this assumption is desirable, since the feature space may have information about the task not captured fully by weak sources. However, naively attempting to model per-point accuracies leads to noisy estimation.
\item {\bf Low Coverage.} Weak sources frequently abstain, leading to low coverage---a situation where much of the dataset has no votes. A simple mitigation is to extend votes from nearby non-abstaining points, but this is risky if the notion of distance is not well-aligned with the label distribution. 
\end{itemize}

An intuitive way to tackle these two challenges is to operate \emph{locally}. 
Suppose the source votes and the true label satisfy some level of smoothness such that within some local region of the feature space, they have a low probability of changing values. 
We can then model accuracies specific to such local regions and can extend source votes to points they abstain on within the regions. 
However, raw image and text features may lack signal and not offer sufficient smoothness to permit operating locally. 
By acting on the embedding space, the desired smoothness property is improved (see Figure~\ref{smoothness}). We can thus obtain finer-grained accuracy estimation and improved coverage by using FM embeddings to model local accuracies and extend locally.

Next, we make these notions concrete by presenting the explicit model for $\Pr(y, \bm{\lf} | x)$.

\subsection{Label Model} \label{sec:lm}
We model $\Pr(y, \bm{\lf} | x)$ as a probabilistic graphical model. 
Our use of this model has two steps. First, in training, we must estimate the accuracy parameters of $\Pr(y, \bm{\lf} | x)$ without access to $y$. Then, at inference, we compute $\hat{\Pr}(y | \bm{\lf}, x)$.

Let the graphical model be based on $G = (V, E)$, where $V = y \cup \bm{\lf}$ and $E$ consists of edges from $y$ to each $\lf_j$ (see Figure~\ref{fig:banner} middle). 
For simplicity, we assume there are no dependencies between the weak sources, although the dependencies can be learned~\citep{varma2019learning} and handled by our choice of base estimator from~\citep{fu2020fast}. Therefore, our approach can be extended to that case as well.
We model the data distribution as
\ifsinglecolumn
\begin{align}
\Pr(y, \bm{\lf} | x) = \frac{1}{Z} \exp &\Big(\underbrace{\theta_y(x) y}_{\text{Class Balance}} + \sum_{i = 1}^m \underbrace{\theta_i(x) \lf_i y}_{\text{Source Accuracy}} + \sum_{i = 1}^m \underbrace{\theta_{i, 0}(x) \ind{\lf_i = 0}}_{\text{Abstain Rate}} \Big) 
\label{eq:pgm} 
\end{align}
\else
\begin{align}
\Pr(y, \bm{\lf} | x) = \frac{1}{Z} \exp &\Big(\underbrace{\theta_y(x) y}_{\text{Class Balance}} + \sum_{i = 1}^m \underbrace{\theta_i(x) \lf_i y}_{\text{Source Accuracy}} \nonumber \\
&\qquad   + \sum_{i = 1}^m \underbrace{\theta_{i, 0}(x) \ind{\lf_i = 0}}_{\text{Abstain Rate}} \Big) 
\label{eq:pgm} 
\end{align}
\fi
with partition function $Z$ and a set of canonical parameters per $x$, $\Theta(x) = \{ \theta_y(x), \theta_i(x), \theta_{i, 0}(x) \; \forall i \in [m]\}$. 
An important property above is that $\lf_i \independent \lf_j | y, x\; \forall i, j \in [m]$.

The model concretely portrays the two challenges in weak supervision. 
First, canonical parameters $\Theta(x)$ that are a function of the input can capture varying accuracy across the data. 
This is less strict than prior formulations that model the marginal $\Pr(y, \bm{\lf})$ with one set of canonical parameters without considering input data.
However, estimating $\Theta(x)$ is challenging; parametric approaches require certain assumptions on the function $\Theta$ as well as the distribution of $x$ in order to recover the ground truth labels, but these assumptions (e.g., Gaussian $x$) are often not realistic. Standard nonparametric approaches have a high computational complexity and rely on smoothness of the input space $\X$.
Second, when $\lf_i(x) = 0$, the weak source provides no information on $x$ at inference and is thus typically ignored on that point in previous approaches. 
This is reflected in the graphical model by Lemma~\ref{lemma:abstain} in Appendix~\ref{sec:supp_pgm}, by which $\Pr(y | \lf_i = 0, \bm{\lf} \backslash \lf_i, x) = \Pr(y | \bm{\lf} \backslash \lf_i, x)$. 
In fact, the weak sources provide no direct signal on $x$  when $\bm{\lf}(x) = \vec{0}$. 

\textbf{Pseudolabel Inference.} To perform inference, we compute $\hat{\Pr}(y | \bm{\lf}, x)$ for some $x\in\X$.  This is done via Bayes' rule and the conditional independence of weak sources: $\Pr(y | \bm{\lf}, x) = \prod_{i = 1}^m \Pr(\lf_i | y, x) \Pr(y | x) / \Pr(\bm{\lf} | x)$. The latent parameter of interest in this decomposition is $\Pr(\lf_i | y, x)$, which corresponds to the accuracy of $\lf_i$.

\textbf{Source Parameter Estimation.} Previous approaches have considered how to estimate $\Pr(\lf_i | y)$ in a model of $\Pr(\lf, y)$ via the \emph{triplet method}~\citep{fu2020fast}, using conditional independence properties.
For our setting, \eqref{eq:pgm} tells us that $\lf_i y \independent \lf_j y | \lf_i \wedge \lf_j \neq 0, x$ for any $i \neq j$ (Lemma~\ref{lemma:triplet_independence} in Appendix~\ref{sec:supp_pgm}). 
As a result, $\E{}{\lf_i y | \lf_i \neq 0, x} \times \E{}{\lf_j y | \lf_j \neq 0, x} = \E{}{\lf_i \lf_j y^2 | \lf_i \wedge \lf_j \neq 0, x} = \E{}{\lf_i \lf_j | \lf_i \wedge \lf_j \neq 0, x}$, which consists of observable variables. 
Define $a_i(x) = \E{}{\lf_i y | \lf_i \neq 0, x}$ as the \emph{accuracy} of $\lf_i$ on $x$. 
If we introduce a third $\lf_k$, we can generate a system of equations over $a_i(x), a_j(x), a_k(x)$ in terms of the conditional expected products of pairs of $\lf_i, \lf_j, \lf_k$. 
As a result, 
\ifsinglecolumn
\begin{align}
|a_i(x) | :=  \sqrt{\bigg| \frac{\E{}{\lf_i \lf_j | \lf_i \wedge \lf_j \neq 0, x} \E{}{\lf_i \lf_k | \lf_i \wedge \lf_k \neq 0, x}}{\E{}{\lf_j \lf_k | \lf_j \wedge \lf_k \neq 0, x}}\bigg|}, \label{eq:triplet}
\end{align}
\else
\begin{align}
&|a_i(x) | := \label{eq:triplet} \\
&\sqrt{\bigg| \frac{\E{}{\lf_i \lf_j | \lf_i \wedge \lf_j \neq 0, x} \E{}{\lf_i \lf_k | \lf_i \wedge \lf_k \neq 0, x}}{\E{}{\lf_j \lf_k | \lf_j \wedge \lf_k \neq 0, x}}\bigg|}, \nonumber
\end{align}
\fi
and likewise for $\hat{a}_j(x), \hat{a}_k(x)$.
More details are in Appendix~\ref{sec:supp_triplet}.
\eqref{eq:pgm} allows us to write $\Pr(\lf_i | y, x) = \frac{1 + \sgn(\lf_i y) a_i(x)}{2} \times \Pr(\lf_i \neq 0 | x)$ (Lemmas~\ref{lemma:abstain} and~\ref{lemma:symmetry}), so the desired probability estimate is just a linear transformation of $a_i(x)$ scaled by $\lf_i$'s coverage.

%!TEX root = ../main.tex

\section{Fusion Algorithm}
\label{sec:method}

We are ready to present \sysname, our approach to fusing foundation model embeddings and weak supervision.
We explain the two components: first, how to compute conditional estimates of the label model parameters over local regions of the partitioned embedding space for finer-grained accuracy estimation;
second, how to extend weak sources via a kNN-like augmentation in the embedding space, improving their coverage and hence the signal available at inference.
The full approach is shown in Algorithm~\ref{alg:main}. 

\begin{algorithm}[t]
	\caption{\sysname}
	\begin{algorithmic}
		\STATE \textbf{Input:}
		Dataset $\D = \{x_i\}_{i = 1}^n$, weak sources $\bm{\lf}$,  embedding mapping $f$ and metric $\rho$, threshold radii $r_1, \dots r_m$, partition $\C$ and class balances $\Pr(y | C_j)$ for $j \in [s]$.
		\STATE \textbf{Returns:} Label model $\hat{\Pr}(y | \bm{\lfbar}, x)$.
		\FOR{$\lf_i \in \bm{\lf}$}
			\STATE Construct extended source $\lfbar_i$ using $r_i, f, \rho$ as in~\eqref{eq:extended}.
		\ENDFOR
		\FOR{$C_j \in \C$}
			\FOR {$\lfbar_i \in \bm{\lfbar}$}
				\STATE Compute accuracy $\hat{a}_i(C_j)$ using Algorithm~\ref{alg:triplet} on $\lfbar_i$ over $C_j$, and compute coverage $\hat{\Pr}(\lfbar_i \neq 0 | C_j)$ on $\D$. 
				\STATE Set $\hat{\Pr}(\lfbar_i | y, C_j)$ equal to $\frac{1 + \sgn(\lfbar_i y) \hat{a}_i(C_j)}{2} \hat{\Pr}(\lfbar_i \neq 0 | C_j) $ for $\lfbar_i \in \{-1, 1\}$, $\hat{\Pr}(\lfbar_i = 0 | C_j)$ otherwise.
			\ENDFOR
			\STATE Compute $\hat{\Pr}(\bm{\lfbar} | C_j)$ on $\D$.
		\ENDFOR
		\RETURN For test point $x \in \X$, compute $\hat{\Pr}(y | \bm{\lfbar}, x) = \hat{\Pr}(y | \bm{\lfbar}, C(x)) = \frac{\prod_{i = 1}^m \hat{\Pr}(\lfbar_i | y, C(x)) \Pr(y | C(x))}{\hat{\Pr}(\bm{\lfbar} | C(x))}$.
	\end{algorithmic}
	\label{alg:main}
\end{algorithm}

\paragraph{Local Parameter Estimation}

Our first task is to compute the label model's local parameters. Based on~\eqref{eq:triplet}, the quantities to estimate are of the form $\E{}{\lf_i \lf_j | \lf_i \wedge \lf_j \neq 0, x}$, $\Pr(\lf_i \neq 0 | x)$, $\Pr(\bm{\lf} | x)$, $\Pr(y | x)$. 
These conditional statistics can be estimated using nonparametric approaches such as the Nadaraya-Watson estimator, but they require $\mathcal{O}(n)$ computations per point at inference. 

Instead of estimating parameters per point, we partition the embedding space and compute \emph{per-part} statistics. Intuitively, this choice exploits smoothness.
If label distributions are smooth, i.e., they do not vary greatly within a local region, it is sufficient to estimate per-point statistics using a part given that parts are not too large. Controlling the size of the partition is thus important in determining how well we can approximate per-point statistics.

Concretely, partition $\Z$ into $s$ subsets $\C = \{C_1, \dots, C_s\}$ of equal size $n' = \frac{n}{s}$ (we use K-means clustering with $K=s$ in practice).
Denote $C(x)$ as the subset $f(x)$ belongs to. 
Instead of estimating statistics and performing inference conditioned on $x$, we condition on $C(x)$, producing $s$ sets of parameters overall. 
We estimate $\E{}{\lf_i \lf_j | \lf_i \wedge \lf_j \neq 0, C(x)}$, yielding a local accuracy estimate $\hat{a}_i(C(x))$ formalized in Algorithm~\ref{alg:triplet}, as well as $\Pr(\lf_i \neq 0 | C(x))$, $\Pr(\bm{\lf} | C(x))$, $\Pr(y | C(x))$. Then, we use $\hat{\Pr}(y | \bm{\lf}, x) = \hat{\Pr}(y | \bm{\lf}, C(x))$ as our label model prediction on $x$.
These estimates are done over the subsets; for instance, $\Pr(\bm{\lf} | C(x)) \approx \frac{1}{n'}\sum_{x' \in C(x)} \ind{\bm{\lf}(x') \texttt{=} \bm{\lf}}$.
We assume that class balance on subsets, $\Pr(y | C(x))$, are known.
There are also several techniques that can be used to estimate these~\citep{ratner2019training}, or they can be treated as hyperparameters.

\paragraph{Weak Source Extension}
Next, we improve the model of $\hat{\Pr}(y | \bm{\lf}, x)$ by increasing source coverage. 
Let $\lfbar_i$ be an extended labeling function with corresponding threshold radius $r_i > 0$ for $i \in [m]$.
The extension works as follows.
For any $x$, let $\NN(x) = \argmin{x' \in \D: \lf_i(x) \neq 0} \rho(f(x), f(x'))$ be the nearest neighbor of $x$ in embedding space from $\D$ such that $\lf_i$ has coverage on it. 
$\lfbar_i$ uses nearest neighbors to weakly label points within $r_i$ of $\lf_i$'s support on $\D$. Formally,
\begin{align}
\lfbar_i(x) := \begin{cases}
\lf_i(x) & \lf_i(x) \neq 0 \\
\lf_i(\NN(x)) & \rho(f(x), f(\NN(x))) \le r_i \\ 
0 & \text{o.w.}
\end{cases}. \label{eq:extended}
\end{align}

We can view $\lfbar_i$ as an augmentation on $\lf_i$ using $\D$ and $f$.
We thus perform parameter estimation and inference using $\bm{\lfbar}$ instead of $\bm{\lf}$, namely learning $\Pr(y | \bm{\lfbar}, C(x))$. 

There are two advantages to using extended sources.
First, extended sources improve sampling error, since expressions like $\E{}{\lf_i \lf_j | \lf_i \wedge \lf_j \neq 0, x}$ are estimated over more data in $\D$.
Second, $\lfbar_i$ provides signal at inference on points that $\lf_i$ previously abstains on. However, the quality of this signal greatly depends on $r_i$. If $\lf_i$ is overextended and the embedding space is not sufficiently smooth, points far away from $\lf_i$'s support may receive incorrect extended source votes, suggesting that careful choice of $r_i$ is needed.

Our approach combines the two components discussed---partitioning the embedding space and extending sources---to output predictions $\hat{\Pr}(y | \bm{\lfbar}, C(x))$ as in Algorithm~\ref{alg:main}. Note that our approach builds on the algorithm from~\cite{fu2020fast}, but partitioning and extending can also be done on top of other weak supervision algorithms that model things differently.

%!TEX root = ../main.tex

\section{Theoretical Analysis}
\label{sec:theory}
Now we turn to analyzing Algorithm~\ref{alg:main}. Our goal is to understand how performance depends on the key parameters: fineness of the partition $\C$, radii $r_i$ of the extensions used to improve coverage, and smoothness of the embedding space.

We begin with a result on the generalization error of the label model $\hat{\Pr}(y | \bm{\lf}, x)$, which relies on the number of partitions $s$ to control the granularity of the estimated parameters (Theorem~\ref{thm:gen_err}). Then, we discuss the improvement from using $\bm{\lfbar}$ instead of $\bm{\lf}$. We first bound the local accuracy of an extended source in a region it previously abstains (Lemma~\ref{lemma:extended_acc}), and then we show that as long as this local accuracy is better than random, we can further reduce the generalization error (Theorem~\ref{thm:lift}). The former result presents a bias-variance tradeoff depending on $s$, while the latter has a tradeoff dependent on the threshold radius $r_i$. In both cases, $s$ and $r_i$ must be carefully set based on the signal in the FM embeddings, namely the smoothness of label distributions in the FM embedding space, in order to optimize performance. We provide proofs in Appendix~\ref{sec:supp_proofs}, synthetic experiments supporting our findings in Appendix~\ref{sec:supp_exp_synthetics}, and smoothness measurements on real data in Section~\ref{sec:exp-smoothness} and Appendix~\ref{sec:supp_smooth}.

Define the generalization error of the label model using weak sources $\bm{\lf}$ as the expected cross-entropy loss, $L(\bm{\lf}) = \mathbb{E}_{\D, x, y, \bm{\lf}}[-\log \hat{\Pr}(y | \bm{\lf}, x)]$.

\subsection{Label Model Generalization Error} \label{subsec:gen_err}
We bound the generalization error $L(\bm{\lf})$ of the label model using the unextended, initial weak sources. The key quantity in this analysis is embedding smoothness: 
\begin{definition}[Lipschitzness]
The distributions $\Pr(y | x)$ and $\Pr(\lf_i | y, x)$
are \emph{Lipschitz-smooth} on the metric space $(\Z, \rho)$ with constants $K_y, K_{\lf}, K_{\lf, 0} > 0$ if for all $i \in [m]$,
\ifsinglecolumn
\begin{align*}
&|\Pr(y = 1 | x) - \Pr(y = 1 | x')| \le K_y \rho(f(x), f(x')), \\
&|\Pr(\lf_i = 1 | y, \lf_i \neq 0, x) - \Pr(\lf_i = 1 | y, \lf_i \neq 0, x')| \le K_{\lf}\rho(f(x), f(x')), \\
&|\Pr(\lf_i \neq 0 | x) - \Pr(\lf_i \neq 0 | x')| \le K_{\lf, 0} \rho(f(x), f(x')),
\end{align*}
\else
\begin{align*}
&|\Pr(y = 1 | x) - \Pr(y = 1 | x')| \le K_y \rho(f(x), f(x')), \\
&|\Pr(\lf_i = 1 | y, \lf_i \neq 0, x) - \Pr(\lf_i = 1 | y, \lf_i \neq 0, x')| \\
& \quad \le K_{\lf}\rho(f(x), f(x')), \\
&|\Pr(\lf_i \neq 0 | x) - \Pr(\lf_i \neq 0 | x')| \le K_{\lf, 0} \rho(f(x), f(x')),
\end{align*}
\fi
We refer to these three properties as label, source, and coverage Lipschitzness, respectively.

\label{assumption:lipschitzness}
\end{definition}
In words, if the constants $K_y, K_{\lf}, K_{\lf, 0}$ are small, the class balance of $y$ and the way each source votes (or doesn't) do not vary significantly over a local region of the embedding space.

We define some additional quantities. Set $\alpha = \max_i \E{x}{\frac{1}{p_{ij}} \; \big| \; p_{ij} \neq 0}$, where $p_{ij} = \Pr(\lf_i \neq 0 | f(x) \in C_j)$ is the coverage of $\lf_i$ on $C_j$, to be the largest average inverse source coverage over the subsets. $\alpha$ corresponds to how often sources abstain. Assume that $a_i(C_j) > 0$ for all $\lf_i$ and $C_j$, meaning that the average source accuracy on a subset is better than random. Then, define $a_{\max} = \max_{i,j} a_i(C_j)$, and $b_{\min} = \min\limits_{i, j, k} \{\E{}{\lf_i \lf_k | \lf_i \wedge \lf_k \neq 0, C_j}, \Ehat{\lf_i \lf_k | \lf_i \wedge \lf_k \neq 0, C_j} \}$ as the minimum rate of agreement between sources over subsets, where $\hat{\mathbb{E}}$ denotes the empirical estimate on $\D$. Define $d_{C_j}=\max_{f(x), f(x') \in C_j} \rho(f(x), f(x'))$ as the diameter of $C_j$ and $d_{\C}=\E{x}{d_{C(x)}}$ as its average.

\begin{restatable}[]{theorem}{generr}
Suppose that data $x, y, \bm{\lf}$ follows the model in~\eqref{eq:pgm} and $\Pr(y | x)$ and $\Pr(\lf_i | y, x)$ for each $\lf_i$ are Lipschitz-smooth. The generalization error of the label model $\hat{\Pr}(y | \bm{\lf}, x)$ in Algorithm~\ref{alg:main} when $r_i = 0 \; \forall i$ can be decomposed into $L(\bm{\lf}) \texttt{=} \text{Bias} + \text{Variance} + \text{Irreducible Error} + o(1/n)$, where
\ifsinglecolumn
\begin{align*}
&\text{Bias} \le 2 d_{\C}(K_y + mK_{\lf} + mK_{\lf, 0}), \\
&\text{Variance} \le \frac{ms}{ n} \bigg(\frac{3 \alpha (1 - b_{\min}^2)}{8b_{\min}^2 (1 - a_{\max}^2)} \Big(\frac{1}{b_{\min}^4} + \frac{2}{b_{\min}^2} \Big) + 1 \bigg), \\
&\text{Irreducible Error} = H(y | \bm{\lf}, x),
\end{align*}
\else
\begin{align*}
&\text{Bias} \le 2 d_{\C}(K_y + mK_{\lf} + mK_{\lf, 0}), \\
&\text{Variance} \le \frac{ms}{n} \bigg(\frac{3\alpha (1 - b_{\min}^2)}{8b_{\min}^2 (1 - a_{\max}^2)} \Big(\frac{1}{b_{\min}^4} + \frac{2}{b_{\min}^2} \Big) + 1 \bigg), \\
&\text{Irreducible Error} = H(y | \bm{\lf}, x),
\end{align*}
\fi
where $H(y | \bm{\lf}, x)$ denotes conditional entropy.
\label{thm:gen_err}
\end{restatable}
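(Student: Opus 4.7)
My plan is to perform the standard bias--variance decomposition of the cross-entropy loss against the best per-point label model, introducing the per-partition \emph{population} label model $\Pr(y \mid \bm{\lf}, C(x))$ as the intermediate quantity. Concretely, I would write
\begin{align*}
L(\bm{\lf}) &= \mathbb{E}\!\left[-\log \Pr(y \mid \bm{\lf}, x)\right] + \mathbb{E}\!\left[\log \tfrac{\Pr(y \mid \bm{\lf}, x)}{\Pr(y \mid \bm{\lf}, C(x))}\right] + \mathbb{E}\!\left[\log \tfrac{\Pr(y \mid \bm{\lf}, C(x))}{\hat{\Pr}(y \mid \bm{\lf}, C(x))}\right].
\end{align*}
The first summand is exactly $H(y \mid \bm{\lf}, x)$, i.e.\ the irreducible error. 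The second summand is the approximation error from collapsing a point $x$ to its partition cell $C(x)$, which I will bound by $d_{\C}(K_y + mK_{\lf} + mK_{\lf,0})$ using Lipschitzness, and the third is the sample-estimation error of the triplet method inside each cell, which I will bound by the stated $ms/n$ expression.

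\textbf{Bias step.} For a fixed $\bm{\lf}$ and any $x, x' \in C_j$, I would use Bayes' rule together with the conditional independence from~\eqref{eq:pgm} to write $\Pr(y \mid \bm{\lf}, x) \propto \Pr(y\mid x) \prod_i \Pr(\lf_i \mid y, x)$, further splitting $\Pr(\lf_i \mid y, x)$ into an accuracy factor and a coverage factor as in Lemma~\ref{lemma:abstain}. Each of the three Lipschitz assumptions then controls the perturbation of one of these factors by $K_y\rho, K_\lf \rho, K_{\lf,0}\rho$. Passing to logarithms (absorbing the bounded-away-from-$0/1$ regime into a constant) and summing over the $m$ source factors yields a per-pair bound of $(K_y + mK_\lf + mK_{\lf,0})\,\rho(f(x), f(x'))$ on $|\log \Pr(y\mid \bm{\lf},x) - \log \Pr(y\mid \bm{\lf},C(x))|$ (the factor of $2$ arises from bounding both the numerator and the partition-function denominator in the same way). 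Taking expectations and using $\mathbb{E}_x[d_{C(x)}] = d_\C$ gives the claimed bias bound.

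\textbf{Variance step.} Here I would Taylor-expand $\log \hat{\Pr}(y \mid \bm{\lf}, C(x))$ around the population quantities $\Pr(\lf_i \neq 0 \mid C_j)$, $a_i(C_j)$, and $\Pr(\bm{\lf}\mid C_j)$. Since the triplet estimator is unbiased to first order, the leading term in expectation is the variance of each plug-in factor, scaled by the squared derivative of $\log$. The coverage and joint-marginal estimators contribute the $(1-p_{\min})/p_{\min}$ piece via standard binomial variance and the fact that each cell has $n' = n/s$ samples. The accuracy estimator requires more care: $\hat{a}_i(C_j)$ is a square root of a ratio of three conditional pairwise expectations. I would propagate variance through this via the delta method, where the derivative of the square root produces a $1/(2|a_i(C_j)|)$ term that I bound by $1/(1-a_{\max})$ (via symmetry of the bounds on either side of $a_{\max}$), and the derivative with respect to the ratio produces the $1/b_{\min}^4$ and $1/b_{\min}^2$ factors. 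The numerator $1-b_{\min}^2$ is the maximum per-pair product variance. Summing the contributions of the $m$ accuracies, $m$ coverages, and the joint $\Pr(\bm{\lf}\mid C_j)$ across the $s$ partitions, then dividing by $n' = n/s$ inside each cell, produces the $ms/n$ prefactor. Concentration of the denominator $\Ehat{\lf_j \lf_k \mid \lf_j \wedge \lf_k \neq 0, C_j}$ away from $0$ is exactly where $b_{\min}$ being defined as a minimum over both population and empirical quantities is needed; higher-order remainder terms in the Taylor expansion are then $o(1/n)$.

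\textbf{Main obstacle.} The principal difficulty is the variance term, specifically rigorously propagating error through the nonlinear triplet formula~\eqref{eq:triplet}: the estimator is a square root of a ratio of three \emph{conditional} expectations (conditioned on non-abstention of two sources simultaneously, inside a single partition), so I must jointly control the denominators of those conditional expectations (coverage overlap per cell), the boundedness away from zero of the ratio under the square root (which is where $b_{\min}$ and the minimum over both $\E{}{\cdot}$ and $\Ehat{\cdot}$ in its definition become essential), and then take a second-order Taylor expansion of $\log$ around $a_i(C_j)$ while showing the higher-order terms collapse into the $o(1/n)$ residue. Once this is in hand, the bias bound and the entropy identity are essentially bookkeeping.
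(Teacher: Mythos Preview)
Your high-level plan matches the paper's proof: write $L(\bm{\lf}) = H(y\mid\bm{\lf},x) + (\text{per-}x\text{ vs.\ per-}C(x)\text{ bias}) + (\text{population vs.\ empirical variance})$, expand via the Bayes factorization $\Pr(y\mid\bm{\lf},\cdot)\propto \Pr(y\mid\cdot)\prod_i \Pr(\lf_i\mid y,\cdot)$, and bound each factor separately. Two places in your execution depart from the paper and, as written, would not recover the stated constants.

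\emph{The normalizer is dropped, not estimated.} You propose to include ``the joint $\Pr(\bm{\lf}\mid C_j)$'' in the variance accounting. The paper instead observes that after the Bayes factorization the normalizer contributes $-\mathbb{E}_{x,\bm{\lf}}\bigl[\log\tfrac{\Pr(\bm{\lf}\mid x)}{\hat\Pr(\bm{\lf}\mid C(x))}\bigr] = -\mathbb{E}_x\bigl[D_{\mathrm{KL}}(\Pr(\bm{\lf}\mid x)\,\|\,\hat\Pr(\bm{\lf}\mid C(x)))\bigr]\le 0$ and simply discards it. Thus only the $m$ accuracy factors and the $m$ coverage factors feed the variance bound; the $(1-p_{\min})/p_{\min}$ piece comes from the $m$ coverage estimators alone (binomial variance in each cell), not from the joint marginal.

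\emph{The bias bound uses KL structure, not pointwise log-Lipschitzness.} Your step ``absorbing the bounded-away-from-$0/1$ regime into a constant'' would insert a constant depending on how close the factors get to $0$, which is absent from the stated bound. The paper instead notes that each bias contribution is an expected KL over a Bernoulli, e.g.\ $\mathbb{E}_x[D_{\mathrm{KL}}(\Pr(y\mid x)\,\|\,\Pr(y\mid C(x)))]$, and bounds it via $\log(1+u)\le u$: writing $p_{y,x}=\Pr(y{=}1\mid x)$ and $p_{y,j}=\Pr(y{=}1\mid C_j)$, one gets $D_{\mathrm{KL}}\le p_{y,x}\tfrac{K_y d_j}{p_{y,j}} + (1-p_{y,x})\tfrac{K_y d_j}{1-p_{y,j}}$, and \emph{integrating over $x\in C_j$} collapses each fraction to $1$, yielding exactly $2K_y d_j$ with no boundedness assumption. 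That integration-cancellation, not ``numerator plus denominator,'' is the source of the factor $2$.

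One smaller point on the variance: the $(1-a_{\max})^{-2}$ factor does not come from the square-root derivative (your bound $1/(2|a_i|)\le 1/(1-a_{\max})$ fails when $a_i$ is small). The paper Taylor-expands $\log(1\pm\hat a_i)$ around $a_i$, and the first-order terms cancel when weighted by $\Pr(\lf_i y=\pm 1\mid C_j)$, leaving $\tfrac{\mathbb{E}[(\hat a_i-a_i)^2]}{1-a_i^2}\le \tfrac{\mathbb{E}[(\hat a_i-a_i)^2]}{(1-a_{\max})^2}$; the MSE $\mathbb{E}[(\hat a_i-a_i)^2]$ is then bounded by a cited lemma on the triplet estimator, which is where the $b_{\min}$ factors enter. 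With these adjustments your plan is the paper's proof.
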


We discuss each term of this bound.
\begin{itemize}[itemsep=0.5pt,topsep=0pt,leftmargin=*]
\item The bias comes from the partition $\C$, since conditional statistics on $C(x)$ are not equivalent to those on $x$. When the embedding space is smooth with small $K_y, K_{\lf}, K_{\lf, 0}$, the bias is low. Note that making the subset diameter $d_C \rightarrow 0$ makes the bias go to zero.
\item The variance comes from sampling error in Algorithm~\ref{alg:triplet} and $\hat{\Pr}(\lf_i \neq 0 | C_j)$. This quantity scales in $\mathcal{O}(s\alpha /n)$ and also depends on accuracy and agreement among weak sources.
\item The irreducible error depends on quality of $\bm{\lf}$. If knowledge of $\bm{\lf}$ significantly reduces uncertainty in $y$, i.e., the sources contain lots of signal, this quantity is low. On the other hand, $H(y | \bm{\lf}, x)$ is maximized when $\bm{\lf} \independent y | x$, i.e. there is no signal about $y$ in $\bm{\lf}$.
\end{itemize}

Our result reveals a bias-variance tradeoff dependent on the number of parts $s$. As $s$ increases, subset diameter $d_{\C}$ tends to decrease, resulting in lower bias because the subset parameters estimated will be closer in true value to those conditional on $x$. The variance increases in $s$ because there are fewer points per subset for estimation. 
The $s = 1$ case, which incurs a large bias, is algorithmically equivalent to the approach in~\cite{fu2020fast}. Such approaches thus suffer from model misspecification in our setting---and likely in most practical cases---as they assume uniform quality per source.

\subsection{Improvement from Extensions}\label{subsec:lift}
Suppose that $x, y, \bm{\lfbar}$ follows~\eqref{eq:pgm}.
When we use $\bm{\lfbar}$ rather than $\bm{\lf}$ (i.e. $r_i \neq 0$), there are several changes to the decomposition in Theorem~\ref{thm:gen_err}:
\begin{itemize}[itemsep=0.5pt,topsep=0pt,leftmargin=*]
\item The bias is now bounded by $2d_{\C}K_y + 2m(d_{\C} + 2\max_i r_i) (K_{\lf} + K_{\lf, 0})$ (see Lemma~\ref{lemma:ext_bias} in Appendix~\ref{sec:supp_proofs}). We must consider when $\NN(x)$ is not in $C(x)$, essentially resulting in a wider subset diameter.
\item The variance is still $\mathcal{O}(1/n)$, but multiplicative factors change. For instance, $\alpha$ decreases due to improved coverage, thus decreasing the variance. 
\item The irreducible error is now $H(y | \bm{\lfbar}, x)$. 
\end{itemize}

We analyze $H(y | \bm{\lfbar}, x)$ in this section.
$\lfbar_i$ provides more signal than $\lf_i$ at inference on points where $\lf_i(x) = 0$, but the signal about $y$'s value may be incorrect. Extending $\lf_i$ using too large of $r_i$ could yield incorrect source votes, resulting in lower accuracy of the extended weak source.

We first present a result on how $r_i$ controls the extended source's accuracy. Define $a_i = \E{}{\lf_i y | \lf_i \neq 0}$ as the average accuracy of $\lf_i$, and $\bar{a}_i(r_i) = \E{}{\lfbar_i y | \lfbar_i \neq 0, \lf_i = 0}$ as $\lfbar_i$'s average accuracy on the extended region.
We also need a notion of smoothness of $y$ between the original support and the extended region. We define a local notion of \emph{probabilistic Lipschitzness} (PL), originally introduced in~\cite{urner2013probabilistic}. 

\begin{definition}[Probabilistic Lipschitzness]
Define $\p_{\lf_i} = \Pr_{x, y}(\cdot | \lf_i \neq 0)$ to be the distribution of $(x, y)$ over the support of $\lf_i$, and let $\p_{\lf_i, x}$ be its marginal distribution on $x$. Then $\p_{\lf_i}$ is $M$-\emph{probabilistically Lipschitz} for an increasing function $M: \mathbb{R}^+ \rightarrow [0, 1]$ if for any $r > 0$,
\ifsinglecolumn
\begin{align*}
\Pr_{x, y \sim \p_{\lf_i}}(\exists (x', y') &\in \X \backslash \supp(\p_{\lf_i, x}) \times \Y: \rho(f(x), f(x')) \le r, y' \neq y) \le M(r).
\end{align*}
\else
\begin{align*}
\Pr_{x, y \sim \p_{\lf_i}}(\exists (x', y') &\in \X \backslash \supp(\p_{\lf_i, x}) \times \Y: \\
& \rho(f(x), f(x')) \le r, y' \neq y) \le M(r).
\end{align*}
\fi
We refer to this property as local label PL. 
\label{def:pl}
\end{definition}

In words, the probability that there is a point outside of the support of $\lf_i$ but within $r$ of $(x, y) \sim \p_{\lf_i}$ with a different label from $y$ is bounded by an increasing function of $r$. We also define $\beta_i = \mathbb{E}[\lf_i y | \lf_i \neq 0, \exists (x', y'): \lf_i(x') = 0, \rho(f(x), f(x')) \le r_i, y' = y]$ as $\lf_i$'s accuracy over a region close to where $\lf_i$ is extended and $y$ changes value.

With this definition, we show that:
\begin{restatable}[]{lemma}{extendedacc}
Suppose $\p_{\lf_i}$ is $M$-probabilistically Lipschitz. The average accuracy of $\lfbar_i$ on the extended region is at least
$\bar{a}_i(r_i) \ge a_i - (1 + \beta_i) M(r_i)$.
\label{lemma:extended_acc}
\end{restatable}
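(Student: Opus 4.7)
My plan is to expand $\bar{a}_i(r_i)$ using the definition of $\lfbar_i$ on the extended region and then decompose by label agreement between $x$ and its nearest support neighbor. Let $E_i = \{x : \lf_i(x) = 0,\; \rho(f(x), f(\NN(x))) \le r_i\}$ be the extension set, on which $\lfbar_i(x) = \lf_i(\NN(x))$, so that
\begin{align*}
\bar{a}_i(r_i) = \E{}{\lf_i(\NN(x))\, y(x) \mid x \in E_i}.
\end{align*}
I split this via the event $A = \{y(x) = y(\NN(x))\}$, writing $q = \Pr(\bar A \mid x \in E_i)$. On $A$ the integrand equals $\lf_i(\NN(x))\, y(\NN(x))$; on $\bar A$ it equals $-\lf_i(\NN(x))\, y(\NN(x))$, since flipping one factor in the $\pm 1$ product flips its sign.

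To identify the two conditional contributions with $a_i$ and $\beta_i$, I view $\NN : E_i \to \supp(\p_{\lf_i, x})$ as a pushforward and let $\alpha_A, \alpha_{\bar A}$ be the conditional means of $\lf_i(\NN(x))\, y(\NN(x))$ on $A$ and $\bar A$, so that $(1-q)\alpha_A + q\alpha_{\bar A} = a_i$. By the definition of $\beta_i$, conditioning on $\bar A$ picks out exactly the support points that have a nearby, oppositely-labeled extension point, hence $\alpha_{\bar A} = \beta_i$; since $\beta_i \le a_i$ this forces $\alpha_A \ge a_i$ as well. Combining,
\begin{align*}
\bar{a}_i(r_i) = (1-q)\alpha_A - q\alpha_{\bar A} \,\ge\, a_i(1-q) - \beta_i q \,=\, a_i - (a_i + \beta_i)\, q.
\end{align*}
The last step bounds $q \le M(r_i)$ by probabilistic Lipschitzness (Definition~\ref{def:pl}): each bad $x \in E_i$ together with $\NN(x)$ is a witness for the PL event applied to the sample $(\NN(x), y(\NN(x))) \sim \p_{\lf_i}$, namely the existence of a point in $\X \setminus \supp(\p_{\lf_i, x})$ within radius $r_i$ with a different label. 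Finally, using $a_i \le 1$ gives $\bar{a}_i(r_i) \ge a_i - (1 + \beta_i) M(r_i)$.

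The main obstacle is the measure-theoretic alignment between the extended-region distribution (under which $q$ and $\alpha_{\bar A}$ naturally live) and the support distribution $\p_{\lf_i}$ (under which the PL bound and $\beta_i$ are stated). I would handle this by treating $\NN : E_i \to \supp(\p_{\lf_i, x})$ as a measurable pushforward so that $q$ is controlled by the PL probability on the support, with any density mismatch absorbed into the coverage structure on $E_i$. The remaining algebra after this identification is just the conditional-expectation bookkeeping sketched above.
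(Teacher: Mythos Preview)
Your decomposition on the event $A=\{y(x)=y(\NN(x))\}$ is natural, but the identifications you make after it do not hold. The equality $(1-q)\alpha_A+q\alpha_{\bar A}=a_i$ asserts that the pushforward of the extension-region law through $\NN$ coincides with $\p_{\lf_i}$; there is no reason for this, since some support points can be the nearest neighbor of a large mass of extension points and others of none. The same failure infects $\alpha_{\bar A}=\beta_i$ (which is defined as an expectation under $\p_{\lf_i}$, not under any pushforward) and the bound $q\le M(r_i)$ (PL in Definition~\ref{def:pl} controls a probability over $(x,y)\sim\p_{\lf_i}$, whereas $q$ is a probability over $x\in E_i$). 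You flag this ``measure-theoretic alignment'' as an obstacle and then wave it away, but it is the crux; without it none of the three key steps go through. The auxiliary claim $\beta_i\le a_i$ is also unsupported: $\beta_i$ is the accuracy of $\lf_i$ on a particular sub-region of its support, and there is no a priori ordering between it and the global accuracy $a_i$.

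The paper avoids all of this by never trying to identify distributions. It decomposes the error probability $\varepsilon=\Pr(\lfbar_i\neq y)$ over which sample point $x_j$ is the nearest neighbor, then for each $j$ splits $\Pr(\lf_i(x_j)\neq y\mid \NN(x)=x_j)$ into the two cases $\{\lf_i(x_j)=y_j,\ y_j\neq y\}$ and $\{\lf_i(x_j)\neq y_j,\ y_j=y\}$. The key move is to relax ``the particular extension point $x$ has $y\neq y_j$'' to ``there \emph{exists} some point outside the support within radius $r_i$ of $x_j$ with a different label.'' This existence event depends only on $(x_j,y_j)\sim\p_{\lf_i}$, so PL applies directly to give the factor $M(r_i)$, and the conditional accuracy on that event is exactly $\tfrac{1+\beta_i}{2}$ by definition. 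The second case is bounded crudely by $\Pr(\lf_i(x_j)\neq y_j)=\tfrac{1-a_i}{2}$. Summing the nearest-neighbor weights (which total $1$) gives $\varepsilon\le \tfrac{1+\beta_i}{2}M(r_i)+\tfrac{1-a_i}{2}$ and hence the claim. The essential idea you are missing is this existence relaxation, which decouples the extension-region measure from $\p_{\lf_i}$.
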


Our result provides local accuracy guarantees on $\lfbar_i$ as a function of the original $\lf_i$'s accuracy, the probabilistic Lipschitzness of the embedding space, and the $r_i$ the user sets.
Extending a source with higher original accuracy will yield stronger accuracy guarantees in the extended region. 
On the other hand, if $M(r_i)$ is too large due to improper $r_i$ or lack of smoothness, the true label is more likely to change value, and hence accuracy in the extended region worsens. 

Now we can use our result on $\bar{a}_i(r_i)$ to analyze the improvement in irreducible error. We extend just one weak source $\lf_i$ by $r_i$ and keep $\lf_{-i} := \bm{\lf} \backslash \lf_i$ unextended. 
Define $p_i = \Pr(\lfbar_i \neq 0, \lf_i = 0)$ as the proportion of the region where $\lfbar_i$ is extended and $p(\lf_{-i}) = \E{y', \lf_{-i}, \lfbar_i \neq 0, \lf_i = 0}{\Pr(y = y' | \lf_{-i}, x)}$ as the label model's probability of outputting the correct label in the extension region when only using $\lf_{-i}$.

\begin{restatable}[]{theorem}{lift}
Suppose that data follows the model in~\eqref{eq:pgm}. The irreducible error decreases by at least the following amount when using $\lfbar_i$ rather than $\lf_i$ in Algorithm~\ref{alg:main}:
\begin{align*}
H(y | \bm{\lf}, x) - H(y | \bm{\lfbar}, x) \ge 2 p_i (1 - p(\lf_{-i}))^2 \cdot \bar{a}_i(r_i)^2.
\end{align*}
\label{thm:lift}
\end{restatable}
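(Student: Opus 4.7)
The plan is to localize the entropy gap to the extension region and then bound the resulting conditional mutual information via Pinsker's inequality plus a Jensen step. First, I would decompose the expectation defining $H(y \mid \bm{\lf}, x) - H(y \mid \bm{\lfbar}, x)$ according to three regions of $\X$: (i) $\{\lf_i \neq 0\}$, where $\lfbar_i = \lf_i$ by construction and the two log-likelihoods agree pointwise; (ii) $\{\lf_i = 0,\ \lfbar_i = 0\}$, where Lemma~\ref{lemma:abstain} gives $\Pr(y \mid \bm{\lf}, x) = \Pr(y \mid \lf_{-i}, x) = \Pr(y \mid \bm{\lfbar}, x)$; and (iii) the extension region $\{\lf_i = 0,\ \lfbar_i \neq 0\}$, which carries mass $p_i$. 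Contributions from (i) and (ii) cancel, leaving
\[
H(y \mid \bm{\lf}, x) - H(y \mid \bm{\lfbar}, x) \;=\; p_i \cdot \E{\lf_{-i}, x \mid \text{ext}}{I(y; \lfbar_i \mid \lf_{-i}, x)}.
\]

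Next I would compute the posterior structure at a fixed $\lf_{-i}, x$ in the extension region. Let $q = \Pr(y = 1 \mid \lf_{-i}, x)$ and $\bar{a} = \bar{a}_i(r_i)$. The symmetric parameterization implied by~\eqref{eq:pgm} (cf.\ Lemma~\ref{lemma:symmetry}) together with the conditional independence $\lfbar_i \independent \lf_{-i} \mid y, x$ gives $\Pr(\lfbar_i = 1 \mid \lf_{-i}, x) = r := 1/2 + \bar{a}(q - 1/2)$, and by Bayes' rule $\Pr(y = 1 \mid \lfbar_i = \pm 1, \lf_{-i}, x) - q = \pm\, \bar{a}\, q(1-q)/\Pr(\lfbar_i = \pm 1 \mid \lf_{-i}, x)$. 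Writing the mutual information as $\E{\lfbar_i \mid \lf_{-i}, x}{D(\Pr(y \mid \lfbar_i, \lf_{-i}, x)\, \|\, \Pr(y \mid \lf_{-i}, x))}$ and applying Pinsker's inequality (the binary total variation is $|q_\pm - q|$), the Bayes denominators collapse and give
\[
I(y; \lfbar_i \mid \lf_{-i}, x) \;\ge\; \frac{2\, \bar{a}^2\, q^2(1-q)^2}{r(1-r)} \;\ge\; 8\, \bar{a}^2\, q^2(1-q)^2,
\]
where the last inequality uses $r(1-r) \le 1/4$.

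The final step takes expectation over the extension region and invokes Jensen. Unpacking the definition of $p(\lf_{-i})$ with $y'$ drawn from the label-model posterior $\Pr(y \mid \lf_{-i}, x)$ gives $\E{y' \mid \lf_{-i}, x}{\Pr(y = y' \mid \lf_{-i}, x)} = q^2 + (1-q)^2 = 1 - 2q(1-q)$, which yields the identity $1 - p(\lf_{-i}) = 2\, \E{\text{ext}}{q(1-q)}$. Jensen's inequality then gives $\E{\text{ext}}{q^2(1-q)^2} \ge (\E{\text{ext}}{q(1-q)})^2 = (1 - p(\lf_{-i}))^2/4$, and combining with the previous display produces $p_i \cdot 8\, \bar{a}_i(r_i)^2 \cdot (1 - p(\lf_{-i}))^2/4 = 2 p_i (1 - p(\lf_{-i}))^2\, \bar{a}_i(r_i)^2$, as claimed.

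The main obstacle is the Bayes-rule bookkeeping in the middle step: the cancellations that turn $\Pr(y = 1 \mid \lfbar_i = \pm 1, \lf_{-i}, x) - q$ into the clean product $\pm \bar{a}\, q(1-q)/\Pr(\lfbar_i = \pm 1 \mid \lf_{-i}, x)$ depend on the symmetric abstain/accuracy structure of~\eqref{eq:pgm} and must be checked line by line. A secondary subtlety is that $\bar{a}_i(r_i)$ in the statement is an average over the extension region (per Lemma~\ref{lemma:extended_acc}), so if the pointwise local accuracy varies across $x$, an additional Jensen or worst-case substitution is needed to pull the factor $\bar{a}_i(r_i)^2$ outside the expectation before the second Jensen step on $q(1-q)$.
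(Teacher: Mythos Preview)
Your localization step (splitting into the three regions and reducing the entropy gap to the extension region) is correct and matches the paper exactly. The real problem is what you call a ``secondary subtlety'': it is in fact the central obstruction, and the fix you suggest does not work.

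In your middle step you set $\bar a = \bar a_i(r_i)$ and use it as the channel parameter at the point $x$, but $\bar a_i(r_i) = \E{}{\lfbar_i y \mid \lfbar_i \neq 0,\, \lf_i = 0}$ is an average over the whole extension region, while the Bayes computation at $x$ requires the \emph{local} accuracy $a(x)$. With the correct local quantity your Pinsker bound reads $I(y;\lfbar_i \mid \lf_{-i},x) \ge 8\, a(x)^2\, q(x)^2(1-q(x))^2$, and to finish you would need
\[
\E{\text{ext}}{a(x)^2\, q^2(1-q)^2} \;\ge\; \bar a_i(r_i)^2 \cdot \bigl(\E{\text{ext}}{q(1-q)}\bigr)^2.
\]
This is false in general: put half the mass where $a(x)\approx 0$ and $q=1/2$, and the other half where $a(x)\approx 1$ but $q$ is near $0$ or $1$; the left side is driven to $0$ while the right side stays bounded away from $0$. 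No ``additional Jensen'' rescues this, because after Pinsker the integrand $a^2 q^2(1-q)^2$ is not jointly convex in $(a,q)$ (it is not even convex in $q$).

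The paper avoids this by reversing your order of operations. It keeps the pointwise contribution in its exact KL form
\[
f(\alpha,p) \;=\; \alpha \log \frac{\alpha}{\alpha p + (1-\alpha)(1-p)} + (1-\alpha)\log\frac{1-\alpha}{(1-\alpha)p + \alpha(1-p)},
\]
with $\alpha = \alpha_i(x)$ and $p = p(x,\lf_{-i})$, argues that $f$ is convex in both arguments, and applies Jensen \emph{first} to replace $(\alpha_i(x),\, p(x,\lf_{-i}))$ by their extension-region averages $(\alpha_i,\, p_{\lf_{-i}})$. Only \emph{after} this averaging does it invoke the quadratic lower bound $f(\alpha,p)\ge 8(1-p)^2(\alpha-\tfrac12)^2$ (which is essentially your Pinsker step), now applied to scalars. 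The order matters: Jensen-then-quadratic lets both varying quantities be averaged simultaneously, whereas your quadratic-then-Jensen route loses the structure needed to pull $\bar a_i(r_i)$ outside.
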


Lift increases with probability mass $p_i$ on the extended region since more of the data is impacted by $\lfbar_i$. Lift is not as significant if $p(\lf_{-i})$ is large because the other weak sources already are providing sufficient signal for $y$. Most importantly, lift scales with how far $\bar{a}_i(r_i)$ is from $0$ (random voting). This highlights a tradeoff in $r_i$: as $r_i$ increases, $p_i$ increases but the lower bound on $\bar{a}_i(r_i)$ from Lemma~\ref{lemma:extended_acc} decreases. This shows that threshold radii must be selected carefully; if the embedding space has strong probabilistic Lipschitzness (i.e. small $M$) or the original weak source has high accuracy, then the source can be extended further while providing lift. However, overextension of the source can yield low local accuracy and thus less lift.

Our results demonstrate that $s$ and $r_i$ control the label model's performance, and setting these terms depends on how smooth label distributions are in the embedding space.

%!TEX root = ../main.tex

\section{Experiments}
\label{sec:exp}
%!TEX root = ../main.tex

\begin{table*}[h!]
    \centering
    \begin{minipage}{4.25in}
        \centering
        \scriptsize
        \begin{tabular}{@{}rlcccccccc@{}}
            \toprule
            && \multicolumn{4}{c}{Weak Sources Only}  \\
            \cmidrule(lr){3-6}                                  
            & \textbf{Task} & \textbf{WS-kNN} &  \textbf{WS-Adapter} & \textbf{WS-LM} & \textbf{\sysname}  & \textbf{$\Delta$Coverage}    \\
            \midrule                                                                                                            
            \parbox[t]{0mm}{\multirow{3}{*}{\rotatebox[origin=c]{90}{\textbf{NLP}}}}
            & \spam\        & 72.8           & 92.3  & 83.6             & \textbf{95.0}  & +45.5  \\
            & \weather\     & 62.0           & 86.0  & 78.0             & \textbf{98.0}  & +90.2  \\
            & \spouse\      & 16.9           & 17.1  & 47.0             & \textbf{52.2}  & +12.1  \\
            \midrule                                                                                                                                    
            \parbox[t]{0mm}{\multirow{3}{*}{\rotatebox[origin=c]{90}{\textbf{Video}}}}
            & \basketball\  & 33.3           & 48.9            & 27.9           & \textbf{69.6} & +8.3 \\
            & \commercial\  & 84.7           & 92.8            & 88.4           & \textbf{93.5} & +18.8   \\
            & \tennis\      & 83.0           & \textbf{83.8}   & 82.0           & 83.3          & +32.5 \\
            \bottomrule
        \end{tabular}
    \end{minipage}
    \begin{minipage}{2.4in}
        \scriptsize
        \centering
        \begin{tabular}{@{}cccccccccc@{}}    
            \toprule
            \multicolumn{3}{c}{Dev Labels Available}  \\ 
            \cmidrule(lr){1-3}                                  
            \textbf{kNN} &  \textbf{Adapter} & \textbf{\sysname-Adapter}  \\
            \midrule                                                                                                            
            91.2             & 94.4             & \textbf{95.4}              \\
            92.0             & 90.0             & \textbf{96.8}              \\
            21.6             & 15.7             & \textbf{49.6}  \\
            \midrule                                                                                                                                    
            64.4          & 79.3             & \textbf{79.5} \\
            92.0          & 93.0             & \textbf{93.2}       \\
            73.2          & 83.1             & \textbf{84.0}  \\
            \bottomrule
        \end{tabular}
    \end{minipage}
    
    \caption{Left: \sysname\ performance compared to baselines that only have access to weak labels, as well as the change in coverage from traditional weak supervision.
    Right: \sysname-Adapter performance compared to baselines that have access to dev labels.
    Scores are F1 except for Spam and Weather (accuracy); best score in bold in each setting.}
    \label{table:main_results}
\end{table*}

This section evaluates the following claims about \sysname:
\begin{itemize}[itemsep=0.5pt,topsep=0pt,leftmargin=*]
    \item \textbf{Performance (Section~\ref{sec:exp-performance}):} \sysname\ outperforms vanilla weak supervision, as well as baseline approaches for using foundation models directly, either with traditional weak supervision or hand supervision.
    \item \textbf{Smoothness (Section~\ref{sec:exp-smoothness}):} Lift is correlated with the smoothness of the label distribution in the representation space.
    We measure smoothness and performance of CLIP against three other embedding methods on a video task, and measure three prompting strategies for GPT-3 on a relation extraction task.
    \item \textbf{Ablations (Section~\ref{sec:exp-ablations}):} Both components of \sysname---partitioning the representation space and extending labeling function votes---are important for performance.
\end{itemize}

\paragraph{Datasets}
We evaluate \sysname\ on six benchmark NLP and video tasks used to evaluate previous weak supervision methods~\citep{fu2020fast,zhang2021wrench}.
In NLP, \spam\ identifies spam YouTube comments~\citep{alberto2015tubespam}; \weather\ identifies the sentiment of weather-related tweets~\citep{CrowdflowerWeather}; and \spouse\ identifies spouse relationships in newspaper articles~\citep{corney2016million}.
In video, \commercial\ identifies commercial segments in TV news~\citep{hong2021analysis, fu2019rekall}; \tennis\ identifies rallies in tennis segments; and \basketball\ identifies basketball videos in a subset of ActivityNet~\citep{caba2015activitynet}. Each dataset consists of a large unlabeled training set, a smaller hand-labeled \textit{development set} (train/dev split sizes from 187/50 points to 64,130/9,479 points), and a held-out test set. We use the unlabeled training set to train label models and use the development set for a) training of traditional supervision baselines, and b) hyperparameter tuning of the label models, including $s$ and $r_i$.

\paragraph{Pre-trained embeddings} For the NLP datasets, we use pre-trained GPT-3 \citep{brown2020language} embeddings from OpenAI's Ada model.
For \spam\ and \weather, we simply embed the text directly.
For \spouse, we add a prompt ``Are [person 1] and [person 2] spouses?'' after the end of the sentence.
We discuss further prompting strategies in Section~\ref{sec:exp-smoothness}.
For video datasets, we use image embeddings from CLIP~\citep{radford2021learning} over individual frames of the videos.

%!TEX root = ../main.tex

\begin{figure*}[t!]
    \centering
    \begin{minipage}{1.15in}
        \scriptsize
        \begin{tabular}{lr}    
            \toprule                                                       
            \textbf{Embedding} & F1-score \\
            \midrule         
            Raw pixel          & 19.3  \\
            RN-101        & 31.1\\
            BiT-M          & 42.5  \\
            \textbf{CLIP}            & \textbf{69.6}  \\
            \bottomrule
        \end{tabular}
    \end{minipage}
    \begin{minipage}{1.85in}
        \centering
        \includegraphics[width=1.85in]{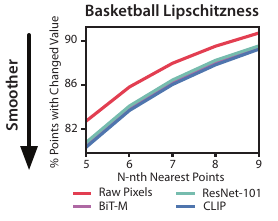}
    \end{minipage}
    \begin{minipage}{1.7in}
        \begin{flushright}
            \scriptsize
            \begin{tabular}{lr}    
                \toprule                                                     
                \textbf{Prompting} & F1-score \\
                \midrule
                No Prompt          & 48.5  \\
                Prompt Beginning          & 50.2  \\
                \textbf{Prompt End}          & \textbf{52.2} \\
                \bottomrule
            \end{tabular}
        \end{flushright}
    \end{minipage}
    \begin{minipage}{1.95in}
        \centering
        \includegraphics[width=1.85in]{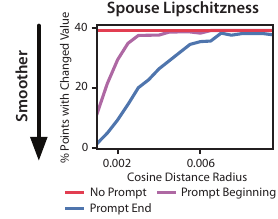}
    \end{minipage}
    \caption{
        Left: \sysname\ performance and smoothness measurements of CLIP, BiT-M, ResNet-101, and raw pixels as embeddings for \basketball.
        Right: \sysname\ performance and smoothness measurements of no prompting, prompting at beginning, and prompting at end in GPT-3 for \spouse.
    }
    \label{smoothness}
\end{figure*}

\subsection{Performance}\label{sec:exp-performance}

We compare \sysname\ against baseline approaches for fusing foundation models with weak supervision, as well as against using either on their own.\footnote{Our code is available at \url{https://github.com/HazyResearch/liger/}.}
We split our evaluation into two parts: methods that only have access to weak sources, and methods that additionally have access to the dev set.

\paragraph{Weak Sources Only}
We compare the performance of \sysname\ against vanilla weak supervision's label model (WS-LM)~\citep{fu2020fast}, as well as two end models, weakly-supervised kNN (WS-kNN), and weakly-supervised adapters (WS-Adapter).
In the latter two methods, we use the predictions from WS-LM to generate pseudolabels for the train set and use the FM embeddings as input data (since we do not access the full FM) to the kNN and adapter approaches.
We consider an adapter that is a linear layer on the FM embeddings. We also provide results on 3-layer MLP adapters in Appendix~\ref{sec:supp_exp}.

Table~\ref{table:main_results} (left) shows the results, as well as statistics on the additive change in coverage ($\%$ of the dataset that sources vote on) between \sysname\ and WS-LM.
\sysname\ outperforms WS-LM and has better coverage (33.2 points on average).
\sysname\ also outperforms both of the baseline approaches for fusing foundation models with weak supervision, WS-kNN and WS-Adapter.

\paragraph{Weak Sources and Dev Labels}
Next, we compare performance against methods that have access to a small hand-labeled dev set.
We compare against two baselines: kNN and Adapter, both trained over the dev set labels.
For our method \sysname-Adapter, we train an adapter over \sysname\ labels on the train set, as well as the dev labels.
In some cases, \sysname\ labels are too noisy to provide good signal on the train set; in this case, our solution automatically downsamples the pseudolabels on the train set.
We also provide the original \sysname\ prediction as input to the adapter.
See Appendix~\ref{sec:supp_details} for the details.

Table~\ref{table:main_results} (right) shows the results.
\sysname-Adapter outperforms Adapter and kNN.
On the datasets where \sysname\ labels are very accurate, we see additional lift from the adapters because we have more points to train on.
When the labels are not very accurate, our downsampling prevents the noisy labels from harming adapter performance.
In one case, learning an adapter over the embeddings is very hard (\spouse).
Here, providing the \sysname\ prediction as input is critical for performance.

\subsection{Embedding Smoothness}\label{sec:exp-smoothness}

We measure how smoothness of the embedding space affects the performance of \sysname.
First, we compare embeddings from CLIP against BiT-M embeddings~\citep{kolesnikov2020big}, a ResNet-101 pretrained on ImageNet~\citep{ILSVRC15}, and raw pixels.
Second, we vary the GPT-3 prompting strategy for \spouse\ and compare against two alternative methods that result in a less smooth representation.
We report label Lipschitzness---the smoothness of embeddings with respect to ground-truth labels---in this section.
See Appendix~\ref{sec:supp_smooth} for additional measures of Lipschitzness.

Figure~\ref{smoothness} (left) shows the performance of CLIP, BiT-M, ResNet-101, and raw pixels as embeddings for \sysname, as well as measures of Lipschitzness for each method (lower is smoother).
CLIP embeddings are smoother than the other methods---which matches their performance when used in \sysname.

\paragraph{Comparing Prompting Strategies}

Next, we examine the impact of prompting strategies for \spouse.
\spouse\ is a relation extraction dataset, where the task is to predict whether two entities in a sentence are married.
Since there may be multiple entities in a sentence, \spouse\ contains multiple duplicate sentences in the dataset, with different labels.
To alleviate this problem, we introduce a prompt ``Are [person 1] and [person 2] spouses?'' after the end of the sentence, where ``[person 1/2]'' are replaced by the names of the first/second entity in the sentence.
We compare this prompting strategy against two others: appending the same prompt to the beginning of the sentence, and leaving the original sentence as-is, without any prompting.

Figure~\ref{smoothness} (right) shows the performance and smoothness of each of these prompting methods.
Adding the prompt to the end of the sentence results in the best performance and smoothest embeddings.
Both methods perform better than leaving the sentence alone (the flat line is a result of multiple sentences with different labels having the same embedding).

\subsection{Ablations}\label{sec:exp-ablations}

%!TEX root = ../main.tex

\begin{table}[t!]
    \centering
    \ifsinglecolumn
        \normalsize
    \else
        \scriptsize
    \fi
    \begin{tabular}{lcccccc}
    \toprule                                                
    \textbf{Task}  & \textbf{\sysname\ ($s$)} & \textbf{-Part} & \textbf{-Ext ($s$)} & \textbf{-Part, -Ext} \\
    \midrule                                                 
    \spam          & \textbf{95.0} (2)        & 94.0           & 92.0 (7)            & 83.6  \\
    \weather       & \textbf{98.0} (3)        & 96.0           & 94.0 (5)            & 78.0   \\
    \spouse        & \textbf{52.2} (6)        & 50.0           & 49.3 (5)            & 47.0   \\
    \midrule                                                   
    \basketball    & \textbf{69.6} (2)        & \textbf{69.6}  & 21.9 (2)            & 27.9  \\
    \commercial    & \textbf{93.5} (3)        & 92.3           & 91.4 (5)            & 88.4  \\
    \tennis        & \textbf{83.3} (1)        & \textbf{83.3}  & 81.3 (2)            & 82.0  \\
    \bottomrule
    \end{tabular}

    \caption{Ablations of \sysname, removing partitions (-Part), extensions (-Ext), and both. Best $s$ values inside parentheses.}
    \label{table:ablations_k}
\end{table}

We report ablations on each component of \sysname.
Table~\ref{table:ablations_k} removes the partioning component and the extensions component.
Partitioning improves performance on \num{four} tasks, and extensions improves performance on all tasks (\num{13.1} points of lift on average from partitioning, \num{3.8} points from extensions).
Combining both additionally offers the best performance on \num{four} tasks.

%!TEX root = ../main.tex

\section{Related Work}
\label{sec:related}

We present an abbreviated related work here. See Appendix~\ref{sec:supp_related} for an extended treatment.

Weak supervision frameworks typically model source accuracies to generate weak labels and then fine-tune an end model for generalization~\citep{Ratner18, bach2018snorkel, khetan2017learning, sheng2020gmail, fu2020fast, zhan2019sequentialws, safranchik2020weakly, boecking2019pairwise}.
One framework models the end-to-end process all at once~\citep{cachay2021endtoend}, but requires training the end model at the same time---which is computationally expensive with large foundation models.
Our work removes the fine-tuning step completely.

Our work is similar to transfer learning techniques, which adapt pretrained models for downstream tasks~\citep{kolesnikov2020big,devlin2018bert}.
Foundation models offer new requirements for transfer learning setting: when it is impossible to fine-tune the original models~\citep{Bommasani21FM}.
We build on approaches such as prompting~\citep{lester2021power, brown2020language}, embedding search~\citep{neelakantan2022text}, and adapters~\citep{houlsby2019parameter,alain2016understanding}.

%!TEX root = ../main.tex

\section{Conclusion}
\label{sec:conc}
We present \sysname, a system for fusing foundation models and weak supervision.
We use embeddings to produce finer-grained estimates of weak source accuracies and improve weak source coverage.
We prove a series of results on how the performance of this approach scales with the smoothness of the embeddings, and demonstrate \sysname\ on six benchmark NLP and video weak supervision datasets.
We hope our work will encourage further work in combining foundation models and weak supervision and in utilizing the signal from foundation models to help with other tasks.

\ifsinglecolumn
\paragraph{Authors' Note}
The first two authors contributed equally. Co-first authors can prioritize their names when adding this paper's reference to their resumes.

\section*{Acknowledgments}
\fi

We thank Fait Poms and Ravi Teja Mullapudi for helpful discussions.
We thank Neel Guha, Megan Leszczynski, Vishnu Sarukkai, and Maya Varma for feedback on early drafts of this paper.
We gratefully acknowledge the support of NIH under No. U54EB020405 (Mobilize), NSF under Nos. CCF1763315 (Beyond Sparsity), CCF1563078 (Volume to Velocity), 1937301 (RTML), and CCF2106707 (Program Synthesis for Weak Supervision); ARL under No. W911NF-21-2-0251 (Interactive Human-AI Teaming); ONR under No. N000141712266 (Unifying Weak Supervision); ONR N00014-20-1-2480: Understanding and Applying Non-Euclidean Geometry in Machine Learning; N000142012275 (NEPTUNE); NXP, Xilinx, LETI-CEA, Intel, IBM, Microsoft, NEC, Toshiba, TSMC, ARM, Hitachi, BASF, Accenture, Ericsson, Qualcomm, Analog Devices, Google Cloud, Salesforce, Total, the HAI-GCP Cloud Credits for Research program,  the Stanford Data Science Initiative (SDSI), Department of Defense (DoD) through the National Defense Science and
Engineering Graduate Fellowship (NDSEG) Program, Wisconsin Alumni Research Foundation (WARF), and members of the Stanford DAWN project: Facebook, Google, and VMWare. The U.S. Government is authorized to reproduce and distribute reprints for Governmental purposes notwithstanding any copyright notation thereon. Any opinions, findings, and conclusions or recommendations expressed in this material are those of the authors and do not necessarily reflect the views, policies, or endorsements, either expressed or implied, of NIH, ONR, or the U.S. Government.

\bibliographystyle{plain}
\bibliography{main}

\newpage 
\appendix
\section*{Appendix}

We present an extended related work (Appendix~\ref{sec:supp_related}), glossary (Appendix~\ref{sec:supp_glossary}), additional algorithmic details (Appendix~\ref{sec:supp_alg}), proofs (Appendix~\ref{sec:supp_proofs}), experimental details (Appendix~\ref{sec:supp_details}), and additional experimental results (Appendix~\ref{sec:supp_exp}).
%!TEX root = ../main.tex

\section{Extended Related Work}
\label{sec:supp_related}

\textbf{Weak supervision} is a broad set of techniques using weak sources of
signal to supervise models, such as
distant supervision~\citep{takamatsu2012reducing}, co-training
methods~\citep{blum1998combining}, pattern-based
supervision~\citep{gupta2014improved} and feature
annotation~\citep{mann2010generalized,liang2009learning}.
Weak supervision frameworks often train in two stages---first modeling source accuracies to generate weak labels, and then fine-tuning a powerful end model for generalization~\citep{Ratner18, bach2018snorkel,
khetan2017learning, sheng2020gmail, fu2020fast,
zhan2019sequentialws, safranchik2020weakly, boecking2019pairwise}.
Our work removes the second stage from the equation and addresses two common challenges in weak supervision, coarse accuracy modeling and low coverage.

One weak supervision work that does not train in two stages, and models source qualities in a way that can be nonuniform over the points is WeaSuL~\citep{cachay2021endtoend}. However, this capability is present in a different context: end-to-end training of a weak supervision label model with an end model. This prevents the use of the label model directly for prediction, as we seek to do in our work. It requires much heavier computational budget, for example, when training a deep model, which is not needed with our approach. In addition, WeaSuL relies on the use of an encoder for source qualities, rendering a theoretical analysis intractable. By contrast, our approach offers clean and easy-to-interpret theoretical guarantees.

Another recent work that utilizes pre-trained embeddings in weak supervision is from~\citep{lang2022training}. Their work uses the smoothness of the embedding space to remove low-quality weakly-labeled data points, while we use it to improve coverage and fine-grained estimation to weakly label additional points more accurately.

\textbf{Transfer learning} uses large datasets to learn useful feature
representations that can be fine-tuned for downstream tasks~\citep{kolesnikov2020big,
devlin2018bert}.
Transfer learning techniques for text applications typically pre-train on large
corpora of unlabeled data~\citep{devlin2018bert,
brown2020language, radford2019language}, while common
applications of transfer learning to computer
vision pre-train on both large supervised datasets such as
ImageNet~\citep{ILSVRC15} and large
unsupervised or weakly-supervised datasets~\citep{he2019momentum, chen2020simple, radford2021learning}.
Pre-trained embeddings have also been used as data point descriptors for 
kNN search algorithms to improve model
performance, interpretability, and robustness~\citep{papernot2018deep, khandelwal2019generalization}.
We view our work as complementary to these approaches, presenting another
mechanism for using pre-trained networks.

\textbf{Foundation models} offer a new interface for the transfer learning setting: when it is impossible to fine-tune the original models~\citep{Bommasani21FM}.
In this setting, the foundation models can still be used either by direct prompting~\citep{lester2021power, brown2020language}, or by using embeddings~\citep{neelakantan2022text}. 
\cite{wang-etal-2021-want-reduce} prompts FMs to produce pseudolabels, providing a complementary way to use FMs in weak supervision.
In contrast, our work focuses on using FM embeddings.
Since we can only access the final embeddings of some foundation models, we focus on adapters~\citep{houlsby2019parameter} over the final layer in this work---which are equivalent to linear probes~\citep{alain2016understanding}.

\textbf{Semi-supervised and few-shot learning} approaches aim to learn good
models for downstream tasks given a few labeled examples.
Semi-supervised approaches like label propagation~\cite{iscen2019label} start from a few labeled examples and iteratively fine-tune representations on progressively larger datasets, while few-shot learning
approaches such as meta-learning and metric learning aim to build networks that
can be directly trained with a few labels~\cite{snell2017prototypical}.
Our work is inspired by these approaches for expanding signal from a subset of
the data to the entire dataset using FM representations, but we do not assume that our labeling sources are perfect, and we do not tune the representation.

\section{Glossary} \label{sec:supp_glossary}

The glossary is given in Table~\ref{table:glossary} below.

\begin{table*}[bp!]
\centering
\small
\begin{tabular}{l l}
\toprule
Symbol & Used for \\
\midrule
$x$ & Input data point $x \in \X$. \\
$y$ & True task label $y \in \Y = \{-1, +1\}$. \\
$\bm{\lf}$ & Weak sources $\bm{\lf} = \{\lf_1, \dots, \lf_m\}$, where each $\lf_j: \X \rightarrow \Y \cup \{0\}$ is a probabilistic labeling function \\
 & that votes on each $x$. \\
$m$ & Number of weak sources. \\
$f$ & A fixed mapping from input space $\X$ to embedding space $\Z$ that is made available by the off-the-shelf \\
 & foundation model. \\
$\rho$ & A fixed metric on the embedding space, $\rho: \Z \times \Z \rightarrow \mathbb{R}^+$. \\
$\D$ & A training dataset of $n$ i.i.d. unlabeled points, $\D = \{x_i\}_{i = 1}^n$. \\
$n$ & Number of points in the unlabeled training dataset $\D$. \\
$G$ & The dependency graph $G = (V, E)$ used to model $\Pr(y, \bm{\lf} | x)$, where $V = y \cup \bm{\lf}$ and $E$ contains \\
 & edges between $y$ and $\bm{\lf}$. \\
$\Theta(x)$ & The set of canonical parameters $\Theta(x) = \{\theta_y(x), \theta_i(x), \theta_{i,0}(x) \; \forall i \in [m] \}$ corresponding to \\
 & class balance, source accuracy, and the abstain rate used to parametrize $\Pr(y, \bm{\lf} | x)$ in~\eqref{eq:pgm}. \\
$Z$ & Partition function used for normalizing the distribution of $\Pr(y, \bm{\lf} | x)$. \\
$a_i(x)$ & Accuracy parameter of $\lf_i$ on point $x$, $a_i(x) = \E{}{\lf_i y | \lf_i \neq 0, x}$. \\
$\C$ & Partition of the embedding space $\Z$ into nonoverlapping subsets, $\C = \{C_1, \dots, C_s\}$. \\
$s$ & Size of the partition $\C$. \\
$n'$ & The number of points from $\D$ in each subset $C_j$, $n' = \frac{n}{s}$. \\
$C(x)$ & The subset that $x$ belongs to, i.e. $C(x) = C_j$ if $f(x) \in C_j$. \\
$a_i(C(x))$ & Local accuracy parameter of $\lf_i$ on subset $C(x)$, $a_i(C(x)) = \E{}{\lf_i y | \lf_i \neq 0, C(x)}$. \\
$\hat{a}_i(C(x))$ & Our local accuracy estimate of $a_i(C(x))$ using the triplet method in Algorithm~\ref{alg:triplet}. \\
$\bm{\lfbar}$ & Set of extended weak sources, where each $\lfbar_i$ is extended from $\lf_i$ using threshold radius $r_i$ in~\eqref{eq:extended}.\\
$r_i$ & Threshold radius for $\lf_i$, which determines how much beyond the support of $\lf_i$ to extend votes to. \\
$L(\bm{\lf})$ & Generalization error (cross-entropy loss) of the label model, defined as $L(\bm{\lf}) = \E{\D, x, y, \bm{\lf}}{- \log \hat{\Pr}(y | \bm{\lf}, x)}$. \\
$K_y, K_{\lf}, K_{\lf, 0}$ & Constants in Definition~\ref{assumption:lipschitzness} corresponding to label, source, and coverage Lipschitzness, respectively. \\
$\alpha$ & The maximum average inverse source coverage over the subsets, $\alpha = \max_i \E{x}{\frac{1}{p_{ij}} \; \big| \; p_{ij} \neq 0}$, \\
& where $p_{ij} = \Pr(\lf_i \neq 0 | f(x) \in C_j)$ is the coverage of $\lf_i$ on $C_j$. \\
$a_{\max}$ & The maximum source accuracy over the subsets, $a_{\max} = \max_{i, j} a_i(C_j)$. \\
$b_{\min}$ & The minimum rate of agreement between sources over the subsets, \\
 & $b_{\min} = \min_{i, j, k} \{\E{}{\lf_i \lf_j | \lf_i \wedge \lf_k \neq 0, C_j}, \Ehat{\lf_i \lf_k | \lf_i \wedge \lf_k \neq 0, C_j} \}$. \\
$d_{C_j}$ & The diameter of $C_j$, $d_{C_j} = \max_{f(x), f(x') \in C_j} \rho(f(x), f(x'))$. \\
$d_{\C}$ & The average subset diameter $d_{\C} = \E{x}{d_{C(x)}}$. \\
$H(y | \bm{\lf}, x)$ & Conditional entropy of $y$ given $\bm{\lf}, x$. \\
$a_i$ & The average accuracy of $\lf_i$, $a_i = \E{}{\lf_i y | \lf_i \neq 0}$. \\
$\bar{a}_i(r_i)$ & The average accuracy of $\lfbar_i$ on the extended region, $\bar{a}_i(r_i) = \E{}{\lfbar_i y | \lfbar_i \neq 0, \lf_i = 0}$. \\
$\p_{\lf_i}$ & The distribution of $(x, y)$ over the support of $\lf_i$, $\p_{\lf_i} = \Pr( \cdot | \lf_i \neq 0)$.\\
$M$ & An increasing function $M: \mathbb{R}^+ \rightarrow [0, 1]$ used to describe probabilistic Lipschitzness. \\
$\beta_i$ & $\lf_i$'s accuracy over an area close to where $\lf_i$ is extended and $y$ changes value, \\
 & $\beta_i = \E{}{\lf_i y | \lf_i \neq 0, \exists(x', y') : \lf_i(x') = 0, \rho(f(x), f(x')) \le r_i, y' = y}$. \\
$p_i$ & The proportion of the region where $\lfbar_i$ is extended, $p_i = \Pr(\lf_i \neq 0, \lf_i = 0)$. \\
$p(\lf_{-i})$ & The label model's true probability of outputting the correct label in the extension region when \\
 &  only using $\lf_{-i} = \bm{\lf} \backslash \lf_i$, $p(\lf_{-i}) = \E{y', \lf_{-i}, \lfbar_i \neq 0, \lf_i = 0}{\Pr(y = y' | \lf_{-i}, x)}$. \\
\toprule
\end{tabular}
\caption{
	Glossary of variables and symbols used in this paper.
}
\label{table:glossary}
\end{table*}

\newpage

\section{Additional Algorithmic Details}\label{sec:supp_alg}

We describe some properties of the graphical model that justify our algorithm (Section~\ref{sec:supp_pgm}). Then, we formalize the triplet method algorithm for estimating local accuracy parameters, $\hat{a}_i(C_j)$ (Section~\ref{sec:supp_triplet}).

\subsection{Properties of the Graphical Model} \label{sec:supp_pgm}

\begin{lemma}
For $x, y, \bm{\lf}$ satisfying~\eqref{eq:pgm}, it holds for any $\lf_i$ that
\begin{align*}
\Pr(y, \lf_i = 0 | x) = \Pr(y | x) \Pr(\lf_i = 0 | x).
\end{align*}

That is, $y \independent \ind{\lf_i = 0} | x$.
\label{lemma:abstain}
\end{lemma}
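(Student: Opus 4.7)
The plan is to compute $\Pr(y, \lf_i = 0 | x)$ directly from~\eqref{eq:pgm} by marginalizing out the remaining sources, then verify that the resulting expression factors as $\Pr(y | x)\Pr(\lf_i = 0 | x)$. The only structural facts I will use are that $y \in \{-1,+1\}$ and each $\lf_j$ takes values in $\{-1, 0, +1\}$.

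First I would substitute $\lf_i = 0$ into~\eqref{eq:pgm}. This kills the accuracy term $\theta_i(x)\lf_i y$ and turns the abstain term into the $y$-independent constant $\theta_{i,0}(x)$, giving
$$\Pr(y, \lf_i = 0, \bm{\lf}_{-i} | x) = \frac{e^{\theta_{i,0}(x)}}{Z}\, e^{\theta_y(x) y} \exp\!\Big(\sum_{j \neq i}\theta_j(x)\lf_j y + \sum_{j \neq i}\theta_{j,0}(x)\ind{\lf_j = 0}\Big).$$
Next, I would marginalize over $\bm{\lf}_{-i} \in \{-1,0,+1\}^{m-1}$. Since the exponent is a sum of per-$j$ terms, the sum factorizes into a product of per-source sums. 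For each $j \neq i$, the three-term sum equals $e^{\theta_{j,0}(x)} + e^{\theta_j(x) y} + e^{-\theta_j(x) y} = e^{\theta_{j,0}(x)} + 2\cosh(\theta_j(x))$, where the last simplification uses $y^2 = 1$ together with the evenness of $\cosh$. Calling this $y$-independent product $P_{-i}(x)$, we obtain $\Pr(y, \lf_i = 0 | x) = (e^{\theta_{i,0}(x)} P_{-i}(x)/Z)\, e^{\theta_y(x) y}$.

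Finally, I would identify the two marginals and check their product matches. Summing the previous display over $y \in \{-1,+1\}$ yields $\Pr(\lf_i = 0 | x) = (e^{\theta_{i,0}(x)} P_{-i}(x)/Z)\cdot 2\cosh(\theta_y(x))$. Repeating the same factorization on the joint $\Pr(y, \bm{\lf} | x)$ shows that $Z = 2\cosh(\theta_y(x))\prod_{j=1}^m (e^{\theta_{j,0}(x)} + 2\cosh(\theta_j(x)))$, and hence $\Pr(y | x) = e^{\theta_y(x) y}/(2\cosh(\theta_y(x)))$. Multiplying the two marginals reproduces the expression for $\Pr(y, \lf_i = 0 | x)$ above, which is the claim.

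The main point that could go wrong is ensuring each per-source sum really is $y$-independent, and this is exactly where the ternary range of $\lf_j$ and the binary range of $y$ are needed: if $y$ lived in a larger alphabet, $\cosh(\theta_j(x) y)$ would retain a dependence on $y$ and the factorization would fail. Otherwise the argument is straightforward bookkeeping on which exponential factors depend on $y$ versus on $\lf_i$, together with the observation that the source-dependent product in $Z$ cancels cleanly against the same product appearing in $\Pr(y | x)$.
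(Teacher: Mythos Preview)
Your proof is correct and follows the same underlying strategy as the paper: compute the relevant marginals directly from the exponential-family form \eqref{eq:pgm} and verify that $\Pr(y,\lf_i=0\mid x)$ factors as $\Pr(y\mid x)\Pr(\lf_i=0\mid x)$.

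The execution differs in a way worth noting. The paper only factors out the $i$th source's contribution from the sum over $\lf_{-i}$, leaves the remaining sum $\sum_{\lf_{-i}}\exp(\theta_{-i}\lf_{-i}y+\theta_{-i,0}\ind{\lf_{-i}=0})$ unsimplified, and then verifies the product identity by algebraically reducing it to a statement about $Z$. You instead fully factorize that sum into a product of per-source three-term sums and use the evenness of $\cosh$ together with $y\in\{-1,+1\}$ to show each factor equals $e^{\theta_{j,0}(x)}+2\cosh(\theta_j(x))$, which is manifestly $y$-independent. This buys you a closed-form expression for $Z$ and for $\Pr(y\mid x)$, so the final check is immediate rather than a term-matching argument. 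Your route is shorter and makes the source of the independence (the vanishing of $\theta_i(x)\lf_i y$ at $\lf_i=0$ plus the $y$-symmetry of the remaining per-source sums) completely transparent; the paper's route avoids computing $Z$ explicitly but at the cost of heavier algebra.
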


\begin{proof}
Denote $\lf_{-i} = \bm{\lf} \backslash \lf_i$, and equivalently let $\theta_{-i}$ and $\theta_{-i, 0}$ denote vectors of canonical parameters corresponding to $\lf_{-i}$ in~\eqref{eq:pgm}. We show independence by proving that $\Pr(y = 1, \lf_i = 0 | x) = \Pr(y = 1 | x) \Pr(\lf_i = 0 | x)$:
\begin{align}
\Pr(y = 1, \lf_i = 0 | x) &= \frac{1}{Z} \sum_{\lf_{-i}} \exp \big(\theta_y(x) + \theta_{i, 0}(x) + \theta_{-i}(x) \lf_{-i} + \theta_{-i, 0}(x) \ind{\lf_{-i} = 0} \big) \label{eq:y1lfi0} \\
&= \frac{1}{Z}\exp(\theta_y(x) + \theta_{i, 0}(x))  \sum_{\lf_{-i}} \exp \big(\theta_{-i}(x) \lf_{-i} + \theta_{-i, 0}(x) \ind{\lf_{-i} = 0} \big). \nonumber 
\end{align}

$\Pr(y = 1 | x)$ can be written as $\Pr(y = 1, \lf_i = 1 | x) + \Pr(y = 1, \lf_i = -1 | x) + \Pr(y = 1, \lf_i = 0 | x)$:
\begin{align}
&\Pr(y = 1 | x) = \frac{1}{Z} \sum_{\lf_{-i}} \Big(\exp(\theta_y(x) + \theta_i(x) + \theta_{-i}(x) \lf_{-i} + \theta_{-i, 0}(x) \ind{\lf_{-i} = 0}) \label{eq:y1} \\
&\qquad \qquad + \exp(\theta_y(x) - \theta_i(x) + \theta_{-i}(x) \lf_{-i} + \theta_{-i, 0}(x) \ind{\lf_{-i} = 0}) \nonumber \\
&\qquad \qquad + \exp(\theta_y(x) + \theta_{i, 0}(x) + \theta_{-i}(x) \lf_{-i} + \theta_{-i, 0}(x) \ind{\lf_{-i} = 0}) \Big) \nonumber \\
&= \frac{1}{Z} \sum_{\lf_{-i}} \exp(\theta_{-i}(x) \lf_{-i} + \theta_{-i, 0}(x) \ind{\lf_{-i} = 0}) \exp(\theta_y(x)) \big( \exp(\theta_i(x)) + \exp(- \theta_i(x)) + \exp(\theta_{i, 0}(x))\big). \nonumber
\end{align}

$\Pr(\lf_i = 0 |x)$ can be written as $\Pr(y = 1, \lf_i = 0 | x) + \Pr(y = -1, \lf_i = 0 | x)$:
\begin{align}
\Pr(\lf_i = 0 | x) &= \frac{1}{Z} \sum_{\lf_{-i}} \Big(\exp(\theta_y(x) + \theta_{i, 0}(x) + \theta_{-i}(x) \lf_{-i} + \theta_{-i, 0}(x) \ind{\lf_{-i} = 0}) \label{eq:lfi0}\\
&+ \exp(-\theta_y(x) + \theta_{i, 0}(x) - \theta_{-i}(x) \lf_{-i} + \theta_{-i, 0}(x) \ind{\lf_{-i} = 0}) \Big). \nonumber 
\end{align}

Setting~\eqref{eq:y1lfi0} equal to~\eqref{eq:y1} times~\eqref{eq:lfi0}, the term $\sum_{\lf_{-i}} \exp(\theta_{-i}(x) \lf_{-i}  + \theta_{-i, 0}(x) \ind{\lf_{-i} = 0})$ in the former two equations cancels out. We thus aim to prove the following equality:
\begin{align}
Z \exp(\theta_y(x) &+ \theta_{i, 0}(x)) = \exp(\theta_y(x)) \Big(\exp(\theta_i(x)) + \exp(-\theta_i(x)) + \exp(\theta_{i, 0}(x)) \Big) \times \label{eq:abstain_indep}\\
&\sum_{\lf_{-i}} \exp(\theta_{-i, 0} \ind{\lf_{-i} = 0}) \exp(\theta_{i, 0}(x)) \Big(\exp(\theta_y(x) + \theta_{-i}(x) \lf_{-i}(x)) + \exp(- \theta_y(x) - \theta_{-i}(x) \lf_{-i}(x)) \Big). \nonumber 
\end{align}

Canceling out $\exp(\theta_y(x) + \theta_{i, 0}(x))$, \eqref{eq:abstain_indep} is equal to
\begin{align*}
Z &=  \Big(\exp(\theta_i(x)) + \exp(-\theta_i(x)) + \exp(\theta_{i, 0}(x)) \Big) \times \\
& \sum_{\lf_{-i}} \exp(\theta_{-i, 0} \ind{\lf_{-i} = 0}) \Big(\exp(\theta_y(x) + \theta_{-i}(x) \lf_{-i}(x)) + \exp(- \theta_y(x) - \theta_{-i}(x) \lf_{-i}(x)) \Big),
\end{align*}

which is true since the RHS iterates over all values of $\lf_{-i}, y,$ and $\lf_i$. We have shown that $\Pr(y = 1, \lf_i = 0 | x) = \Pr(y = 1 | x) \Pr(\lf_i = 0 | x)$ and thus that $y \independent \ind{\lf_i = 0} | x$ for any $\lf_i$.

Due to this independence property, we note that
\begin{align*}
\Pr(\lf_i = 0, \lf_{-i} | x) &= \Pr(\lf_i = 0, \lf_{-i} | y = 1, x)\Pr(y = 1 | x) + \Pr(\lf_i = 0, \lf_{-i} | y = -1, x) \Pr(y = -1 | x) \\
&= \Pr(\lf_i = 0 | x) \Big(\Pr(\lf_{-i} | y = 1, x) \Pr(y = 1 | x) + \Pr(\lf_{-i} | y = -1, x) \Pr(y = -1 | x) \Big) \\
&= \Pr(\lf_i = 0 | x) \Pr(\lf_{-i} | x),
\end{align*} 

and hence
\begin{align*}
\Pr(y | \lf_i = 0, \lf_{-i}, x) &= \frac{\Pr(\lf_i = 0 | y, x) \Pr(\lf_{-i} | y, x) \Pr(y | x)}{\Pr(\lf_i = 0, \lf_{-i} | x)} = \frac{\Pr(\lf_i = 0 | x) \Pr(\lf_{-i} | y, x) \Pr(y | x)}{\Pr(\lf_i = 0 | x) \Pr(\lf_{-i} | x)} \\
&= \Pr(y | \lf_{-i}, x).
\end{align*}

\end{proof}

\begin{lemma}
For any $i \neq j$, if $x, \bm{\lf}, y$ follows~\eqref{eq:pgm}, then $\lf_i y \independent \lf_j y | \lf_i \wedge \lf_j \neq 0, x$.
\label{lemma:triplet_independence}
\end{lemma}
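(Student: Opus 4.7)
The plan is to prove the stated conditional independence by expanding the joint distribution on the left via the tower rule over $y$, then using two structural properties of the graphical model in \eqref{eq:pgm}: the standard factorization $\lf_i \independent \lf_j \mid y, x$, and a symmetry property making $\lf_i y \mid \lf_i \neq 0, x$ independent of $y$.

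First, I would observe that the PGM in \eqref{eq:pgm} factorizes as $\Pr(\lf_1, \dots, \lf_m \mid y, x) \propto \prod_i \exp(\theta_i(x) \lf_i y + \theta_{i,0}(x) \ind{\lf_i = 0})$, so $\lf_i \independent \lf_j \mid y, x$ for $i \neq j$. Since the event $\{\lf_i \wedge \lf_j \neq 0\}$ is determined by $(\lf_i, \lf_j)$ separately, this independence persists after conditioning on it: $\lf_i \independent \lf_j \mid y, \lf_i \wedge \lf_j \neq 0, x$. Next, I would invoke the symmetry already implicit in \eqref{eq:pgm}: because the accuracy term is $\theta_i(x) \lf_i y$, the exponent is invariant under the simultaneous flip $(y, \lf_i) \mapsto (-y, -\lf_i)$ on the $\lf_i \neq 0$ states, giving $\Pr(\lf_i = 1 \mid y = 1, \lf_i \neq 0, x) = \Pr(\lf_i = -1 \mid y = -1, \lf_i \neq 0, x)$ (this is the forthcoming Lemma~\ref{lemma:symmetry}). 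Equivalently, the distribution of $\lf_i y \mid \lf_i \neq 0, x$ does not depend on $y$, so $\Pr(\lf_i y = a \mid y, \lf_i \neq 0, x) = \Pr(\lf_i y = a \mid \lf_i \neq 0, x)$ for $a \in \{\pm 1\}$.

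With these two ingredients, the proof is a short calculation. For any $a, b \in \{\pm 1\}$,
\begin{align*}
\Pr(\lf_i y = a, \lf_j y = b \mid \lf_i \wedge \lf_j \neq 0, x)
&= \sum_{y} \Pr(\lf_i y = a \mid y, \lf_i \neq 0, x)\, \Pr(\lf_j y = b \mid y, \lf_j \neq 0, x)\, \Pr(y \mid \lf_i \wedge \lf_j \neq 0, x) \\
&= \Pr(\lf_i y = a \mid \lf_i \neq 0, x)\, \Pr(\lf_j y = b \mid \lf_j \neq 0, x) \sum_y \Pr(y \mid x) \\
&= \Pr(\lf_i y = a \mid \lf_i \neq 0, x)\, \Pr(\lf_j y = b \mid \lf_j \neq 0, x),
\end{align*}
where the first equality uses conditional independence of $\lf_i, \lf_j$ given $y$ (and the fact that each $\lf_i y = a$ event depends only on $\lf_i$ once $y$ is fixed), the second uses the symmetry to drop the $y$-dependence on each factor and Lemma~\ref{lemma:abstain} to replace $\Pr(y \mid \lf_i \wedge \lf_j \neq 0, x)$ by $\Pr(y \mid x)$, and the third sums the class balance to $1$.

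The main obstacle is the symmetry step: without the observation that $\lf_i y \mid \lf_i \neq 0, x$ has a distribution independent of $y$, the sum over $y$ in the display above cannot be decoupled into a product. Once this symmetry is recognized as a direct consequence of the $\theta_i(x) \lf_i y$ parametrization in \eqref{eq:pgm}, the remainder is essentially a routine application of conditional independence and Lemma~\ref{lemma:abstain}.
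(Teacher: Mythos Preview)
Your proof is correct and takes a genuinely different route from the paper's. The paper's argument conditions on the event $\bm{\lf} \neq 0$ to reduce the model in~\eqref{eq:pgm} to the simpler Ising form of~\cite{fu2020fast}, then invokes Proposition~1 of that work to obtain $\lf_i y \independent \lf_j y \mid \bm{\lf} \neq 0, x$, and finally uses Lemma~\ref{lemma:abstain} and the source-wise conditional independence to relax $\bm{\lf} \neq 0$ to $\lf_i \wedge \lf_j \neq 0$. Your approach instead works directly: you expand the joint over $y$, use the factorization $\lf_i \independent \lf_j \mid y, x$, and then invoke the symmetry $\Pr(\lf_i y = a \mid y, \lf_i \neq 0, x) = \Pr(\lf_i y = a \mid \lf_i \neq 0, x)$ (Lemma~\ref{lemma:symmetry}) to decouple the sum over $y$ into a product. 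Your route is self-contained and makes the structural reason for the independence---that $\lf_i y$ has a conditional law not depending on $y$---fully explicit, whereas the paper's route is shorter but defers the substance to an external citation. One small point worth making explicit in your write-up: the replacement $\Pr(y \mid \lf_i \wedge \lf_j \neq 0, x) = \Pr(y \mid x)$ requires not just Lemma~\ref{lemma:abstain} applied to each source separately, but also $\lf_i \independent \lf_j \mid y, x$ to factor $\Pr(\lf_i \neq 0, \lf_j \neq 0 \mid y, x)$; you have both ingredients on the table, but the step reads as if Lemma~\ref{lemma:abstain} alone suffices.
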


\begin{proof}
Conditioning on the event that $\bm{\lf}\neq 0$, we have that
\begin{align}
\Pr(y, \bm{\lf} | \bm{\lf}\neq 0, x) = \frac{1}{Z_0} \exp\Big(\theta_y(x) y + \sum_{i = 1}^m \theta_i(x) \lf_i y \Big) \label{eq:pgm_easy}
\end{align}

for some partition function $Z_0$ different from $Z$ in~\eqref{eq:pgm}. This graphical model now follows the structure of the graphical model in~\cite{fu2020fast} (see their Equation 3). We can thus apply Proposition 1 of their work to get that $\lf_i y \independent \lf_j y | \bm{\lf} \neq 0, x$. From Lemma~\ref{lemma:abstain} and conditional independence of sources, this independence property is equivalent to $\lf_i y \independent \lf_j y | \lf_i \wedge \lf_j \neq 0, x$, as desired.
\end{proof}

\begin{lemma}
If $x, \bm{\lf}, y$ follows~\eqref{eq:pgm}, then for any $\lf_i$,
\begin{align}
\Pr(\lf_i = 1 | y = 1, \lf_i \neq 0, x) &= \Pr(\lf_i = -1 | y = -1, \lf_i \neq 0, x) = \Pr(\lf_i y = 1 | \lf_i \neq 0, x) \\
\Pr(\lf_i = -1 | y = 1, \lf_i \neq 0, x) &= \Pr(\lf_i = 1 | y = -1, \lf_i \neq 0, x) = \Pr(\lf_i y = -1 | \lf_i \neq 0, x). 
\end{align}

Therefore,
\begin{align*}
\Pr(\lf_i | y, \lf_i \neq 0, x) = \frac{1 + \sgn(\lf_i y) a_i(x)}{2}.
\end{align*}
\label{lemma:symmetry}
\end{lemma}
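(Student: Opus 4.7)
The strategy is to marginalize $\Pr(y,\bm{\lf}\mid x)$ over $\lf_{-i}:=\bm{\lf}\setminus\lf_i$ to obtain a closed form for $\Pr(y,\lf_i\mid x)$ when $\lf_i\neq 0$, and then read off the conditional $\Pr(\lf_i\mid y,\lf_i\neq 0,x)$. From~\eqref{eq:pgm}, for any $\lf_i\in\{\pm 1\}$, summing out $\lf_{-i}$ gives
\[
\Pr(y,\lf_i\mid x) \;=\; \frac{1}{Z}\exp\bigl(\theta_y(x)y+\theta_i(x)\lf_i y\bigr)\cdot T_y(x),
\]
where $T_y(x):=\sum_{\lf_{-i}}\exp\!\bigl(\sum_{j\neq i}\theta_j(x)\lf_j y+\sum_{j\neq i}\theta_{j,0}(x)\ind{\lf_j=0}\bigr)$.

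The central observation I would use is that $T_y(x)$ does not depend on $y$. Substituting $\lf_j\mapsto -\lf_j$ for each $j\neq i$ (a bijection on $\{-1,0,+1\}$) leaves each abstain indicator invariant but flips the sign of $\lf_j y$ in the exponent, so the re-indexed sum for $y=+1$ coincides with the original sum for $y=-1$. Setting $T(x):=T_{+1}(x)=T_{-1}(x)$, the factor $T(x)/Z$ becomes a shared multiplicative constant across the four joints $\Pr(y=\pm 1,\lf_i=\pm 1\mid x)$.

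Forming the conditional $\Pr(\lf_i\mid y,\lf_i\neq 0,x)$, the factor $T(x)/Z$ and the class-balance factor $e^{\theta_y(x)y}$ cancel between numerator and denominator, leaving
\[
\Pr(\lf_i=+1\mid y=+1,\lf_i\neq 0,x)\;=\;\frac{e^{\theta_i(x)}}{e^{\theta_i(x)}+e^{-\theta_i(x)}},
\]
and the identical value for $\Pr(\lf_i=-1\mid y=-1,\lf_i\neq 0,x)$; the analogous computation with $\lf_i y=-1$ gives the matching ``error'' probability $e^{-\theta_i(x)}/(e^{\theta_i(x)}+e^{-\theta_i(x)})$. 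To connect to the third expression, I would marginalize over $y$ using $\Pr(y\mid\lf_i\neq 0,x)=\Pr(y\mid x)$ (Lemma~\ref{lemma:abstain}); since $\Pr(\lf_i=y\mid y,\lf_i\neq 0,x)$ takes the same value for $y=\pm 1$, the mixture equals that same value, establishing the three-way equality.

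For the final closed form, I would unwind $a_i(x)=\E{}{\lf_i y\mid\lf_i\neq 0,x}=2\Pr(\lf_i y=+1\mid\lf_i\neq 0,x)-1$ to get $\Pr(\lf_i y=\pm 1\mid\lf_i\neq 0,x)=(1\pm a_i(x))/2$. Combined with the symmetry just proved, which shows $\Pr(\lf_i\mid y,\lf_i\neq 0,x)$ depends on $(y,\lf_i)$ only through $\sgn(\lf_i y)$, this immediately yields $\Pr(\lf_i\mid y,\lf_i\neq 0,x)=(1+\sgn(\lf_i y)a_i(x))/2$. The only delicate step is the $\lf_j\mapsto-\lf_j$ symmetrization for $T_y(x)$; everything else is bookkeeping of normalization constants.
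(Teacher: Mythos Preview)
Your argument is correct. The marginalization over $\lf_{-i}$, the $\lf_j\mapsto -\lf_j$ symmetrization showing $T_y(x)$ is independent of $y$, and the subsequent cancellations all go through as you describe. One small remark: your appeal to Lemma~\ref{lemma:abstain} to get $\Pr(y\mid\lf_i\neq 0,x)=\Pr(y\mid x)$ is fine but not strictly needed, since you have already shown $\Pr(\lf_i=y\mid y,\lf_i\neq 0,x)$ is the \emph{same constant} for $y=+1$ and $y=-1$; any mixture over $y$ therefore returns that constant regardless of the mixing weights.

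By contrast, the paper's proof is a two-line reduction: it conditions on $\bm{\lf}\neq 0$ to land in the abstain-free model~\eqref{eq:pgm_easy} and then invokes Lemma~2 of \cite{chen2021comparing} for the symmetry identities (conditioning further on $\lf_{-i}\neq 0$ being harmless by $\lf_i\independent\lf_{-i}\mid y,x$). Your route is genuinely different in that it is fully self-contained: you never leave the model~\eqref{eq:pgm}, you do not rely on the external reference, and the $\lf_j\mapsto -\lf_j$ bijection makes explicit the mechanism (sign-symmetry of the potentials in $\lf_j y$) that drives the result. The paper's approach is terser and reuses existing machinery; yours would be preferable in a setting where one wants the argument to stand on its own.
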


\begin{proof}
Conditioning on the event that $\bm{\lf} \neq 0$, the graphical model is of the form in~\eqref{eq:pgm_easy} above. This graphical model also follows the structure of that in~\cite{chen2021comparing}, and therefore we obtain our desired properties by Lemma 2 of their work.
\end{proof}

\subsection{Local Accuracy Parameter Estimation Algorithm}\label{sec:supp_triplet}

We formalize the triplet method used to recover latent source parameters $\Pr(\lf_i | y, x)$.
First, when we want to evaluate $\Pr(\lf_i = 1 | y, x)$ or $\Pr(\lf_i = -1 | y, x)$, this probability can be written as $\Pr(\lf_i | y, x, \lf_i \neq 0) \Pr(\lf_i \neq 0 | y, x) = \Pr(\lf_i y | x, \lf_i \neq 0) \Pr(\lf_i \neq 0 | x)$ by Lemmas~\ref{lemma:abstain} and~\ref{lemma:symmetry}. We have that $\E{}{\lf_i y | x, \lf_i \neq 0} = \Pr(\lf_i y = 1 | x, \lf_i \neq 0) - \Pr(\lf_i y = -1 | x, \lf_i \neq 0) = 2 \Pr(\lf_i y = 1 | x, \lf_i \neq 0) - 1$, so $\Pr(\lf_i | y, x) =\frac{1 + \sgn(\lf_i y)a_i(x)}{2} \cdot \Pr(\lf_i \neq 0 | x)$ when $\lf_i \in \{-1, 1\}$. When $\lf_i$ is $0$, the probability we want to estimate is $\Pr(\lf_i = 0 | y, x) = \Pr(\lf_i = 0 | x)$ by Lemma~\ref{lemma:abstain}.

We now explain how our algorithm estimates $a_i(x)$. From Lemma~\ref{lemma:triplet_independence}, we have that $\lf_i y \independent \lf_j y | \lf_i \wedge \lf_j \neq 0, x$ for any $i, j$. Then, given any set of $\lf_i, \lf_j, \lf_k$, we have the set of equations
\begin{align*}
a_i(x) a_j(x) &= \E{}{\lf_i \lf_j | \lf_i \wedge \lf_j \neq 0, x} \\
a_i(x) a_k(x) &= \E{}{\lf_i \lf_k | \lf_i \wedge \lf_k \neq 0, x} \\
a_j(x) a_k(x) &= \E{}{\lf_j \lf_k | \lf_j \wedge \lf_k \neq 0, x}.
\end{align*} 

Solving, we get that 
\begin{align*}
|a_i(x)| = \sqrt{\bigg| \frac{\E{}{\lf_i \lf_j | \lf_i \wedge \lf_j \neq 0, x} \E{}{\lf_i \lf_k | \lf_i \wedge \lf_k \neq 0, x}}{\E{}{\lf_j \lf_k | \lf_j \wedge \lf_k \neq 0, x}}\bigg|}.
\end{align*}

This property allows us to recover $a_i(x)$ up to a sign. As discussed in Section~\ref{sec:method}, we use $C(x)$ to estimate the accuracy parameter over a region of the embedding space, such that in fact we are estimating $a_i(C(x)) = \E{}{\lf_i y | \lf_i \neq 0, C(x)}$ (since Lemma~\ref{lemma:triplet_independence} holds on any $x$, it holds conditioned over $C(x)$ too). We resolve the sign of the accuracy parameter by assuming that $a_i(C(x))> 0$, meaning that the accuracy of a source over a subset is better than random. Finally, rather than estimating $a_i(C(x))$ using just one pair of $\lf_j$ and $\lf_k$, we compute the average $a_i(C(x))$ over all other pairs ($\lf_j, \lf_k \in \bm{\lf}\backslash \lf_i$) to make the estimate less noisy. Our approach for computing $\hat{a}_i(C_j)$ for any $\lfbar_i$ and $C_j$ (note that $\lfbar_i$ and $\lf_i$ are interchangeable in the above given that $(x, y, \bm{\lf})$ and $(x, y, \bm{\lfbar})$ both satisfy~\eqref{eq:pgm}) is described in Algorithm~\ref{alg:triplet}.

\begin{algorithm}[t]
	\caption{Local Accuracy Estimation (Triplet Method)}
	\begin{algorithmic}
		\STATE \textbf{Input:}
		Dataset $\D$, weak sources $\bm{\lfbar}$, partition $C_j$.
		\STATE \textbf{Returns:} Estimate of local accuracy $\hat{a}_i(C_j)$.
		\FOR{$k, l \in [m] \backslash i$}
			\STATE Estimate $\Ehat{\lfbar_i \lfbar_k |\lfbar_i \wedge \lfbar_k \neq 0, C_j}$ over the set of points $\{x \in \D: \lfbar_i(x), \lfbar_k(x) \neq 0, f(x) \in C_j\}$, and similarly estimate $\Ehat{\lfbar_i \lfbar_l | \lfbar_i \wedge \lfbar_l \neq 0, C_j}$ and $\Ehat{\lfbar_k \lfbar_l | \lfbar_k \wedge \lfbar_l \neq 0, C_j}$.
			\STATE Compute $\hat{a}_i^{k, l}(C_j) = \sqrt{\bigg|\frac{\Ehat{\lfbar_i \lfbar_k | \lfbar_i \wedge \lfbar_k \neq 0, C_j} \Ehat{\lfbar_i \lfbar_l | \lfbar_i \wedge \lfbar_l \neq 0, C_j}}{\Ehat{\lfbar_k \lfbar_l | \lfbar_k \wedge \lfbar_l \neq 0, C_j}} \bigg|}$.  
		\ENDFOR
		\RETURN $\hat{a}_i(C_j)$ as the average over all $\hat{a}_i^{k, l}(C_j)$.
	\end{algorithmic}
	\label{alg:triplet}
\end{algorithm}

\section{Proofs} \label{sec:supp_proofs}

We present the proofs for our results in Section~\ref{sec:theory}.

\subsection{Proofs for Section~\ref{subsec:gen_err}}

The proof of Theorem~\ref{thm:gen_err} involves decomposing the generalization error into the irreducible error, bias from using $C(x)$, and variance (sampling error).

\generr*

\begin{proof}

We can write the generalization error as
\begin{align*}
L(\bm{\lf}) = \E{\D, x, y, \bm{\lf}}{-\log \hat{\Pr}(y | \bm{\lf}, x)} =  \E{}{-\log \frac{\hat{\Pr}(y | \bm{\lf}, x)}{\Pr(y | \bm{\lf}, x)}} - \E{x, y, \bm{\lf}}{\log \Pr(y | \bm{\lf}, x)}.
\end{align*}

$-\E{x, y, \bm{\lf}}{\Pr(y | \bm{\lf}, x)}$ is equal to the conditional entropy of $y$ given $\bm{\lf}, x$, expressed as $H(y | \bm{\lf}, x)$observing. This describes the entropy of $y$ after observing the weak labels and input and thus depends on how much signal we are getting from the labelers. Next, we decompose the expected log ratio using our construction of $\hat{\Pr}(\lf_i | y, C(x))$ to get
\begin{align}
L(\bm{\lf}) &=\E{}{-\log \bigg(\frac{\prod_{i = 1}^m \hat{\Pr}(\lf_i | y, C(x)) \Pr(y | C(x))}{\hat{\Pr}(\bm{\lf} | C(x))} \cdot \frac{\Pr(\bm{\lf} | x)}{\prod_{i = 1}^m \Pr(\lf_i | y, x) \Pr(y | x)} \bigg)} + H(y | \bm{\lf}, x) \nonumber \\
&= -\E{}{\sum_{i = 1}^m \log \frac{\hat{\Pr}(\lf_i | y, C(x))}{\Pr(\lf_i | y, x)}} - \E{}{\log \frac{\Pr(\bm{\lf} | x)}{\hat{\Pr}(\bm{\lf} | C(x))}} - \E{}{\log \frac{\Pr(y | C(x))}{\Pr(y | x)}} + H(y | \bm{\lf}, x) \nonumber \\
&= -\E{}{\sum_{i = 1}^m \log \frac{\hat{\Pr}(\lf_i | y, C(x))}{\Pr(\lf_i | y, x)}} - \E{x}{D_{\text{KL}}(\Pr(\bm{\lf} | x) || \hat{\Pr}(\bm{\lf} | C(x)))} - \E{}{\log \frac{\Pr(y | C(x))}{\Pr(y | x)}} + H(y | \bm{\lf}, x) \nonumber \\
&\le \sum_{i = 1}^m \E{x, y, \lf_i}{\log \frac{\Pr(\lf_i | y, x) }{\hat{\Pr}(\lf_i | y, C(x)) }} - \E{x, y}{\log \frac{\Pr(y | C(x))}{\Pr(y | x)}} + H(y | \bm{\lf}, x), \label{eq:gen_bound}
\end{align}

where we have used Lemma~\ref{lemma:abstain} and the fact that the Kullback-Leibler divergence is always nonnegative in the last line. For notation, let $\text{KL}_{C(x)}(y) = \E{x}{D_{\text{KL}}(\Pr(y | x) || \Pr(y | C(x)))} = \E{x, y}{\log \frac{\Pr(y | x)}{\Pr(y | C(x))}}$, be the KL-divergence between distributions conditioned on $C(x)$ versus $x$, which describes the bias we incur from using a partition. Then, $L(\bm{\lf}) \le \sum_{i = 1}^m \E{x, y, \lf_i}{\log \frac{\Pr(\lf_i | y , x)}{\hat{\Pr}(\lf_i | y, C(x))}} + \text{KL}_{C(x)}(y) + H(y | \bm{\lf}, x)$.

 We now simplify the expression $\E{}{\log \frac{\Pr(\lf_i | y, x) }{\hat{\Pr}(\lf_i | y, C(x)) }}$ based on if $\lf_i = 0$ or $\lf_i \in \{-1, 1\}$:
\begin{align}
\mathbb{E}\bigg[&\log \frac{\Pr(\lf_i | y, x) }{\hat{\Pr}(\lf_i | y, C(x)) }\bigg] = \E{x}{\Pr(\lf_i = 0 | x) \log \frac{\Pr(\lf_i = 0 | x)}{\hat{\Pr}(\lf_i = 0 | C(x))}} + \E{x, y, \lf_i \neq 0}{\Pr(\lf_i \neq 0 | x) \log \frac{\Pr(\lf_i | y, x)}{\hat{\Pr}(\lf_i | y, C(x))}} \nonumber \\
&= \E{x}{\Pr(\lf_i = 0 | x) \log \frac{\Pr(\lf_i = 0 | x)}{\hat{\Pr}(\lf_i = 0 | C(x))} + \Pr(\lf_i \neq 0 | x) \log \frac{\Pr(\lf_i \neq 0| x)}{\hat{\Pr}(\lf_i \neq 0 | C(x))}} \nonumber \\
&+ \E{x, y, \lf_i \neq 0}{\Pr(\lf_i \neq 0 | x) \log \frac{\Pr(\lf_i | y, x, \lf_i \neq 0)}{\hat{\Pr}(\lf_i | y, C(x), \lf_i \neq 0)}} \nonumber \\
&= \E{x}{D_{\text{KL}}( \Pr(z_i | x) || \hat{\Pr}(z_i | C(x)))} + \E{x, y, \lf_i \neq 0}{\Pr(\lf_i \neq 0 | x) \log \frac{\Pr(\lf_i | y, x, \lf_i \neq 0)}{\hat{\Pr}(\lf_i | y, C(x), \lf_i \neq 0)}},
\label{eq:lf_i_gen_err}
\end{align}

where $z_i = \ind{\lf_i = 0}$ is an indicator variable pertaining to coverage. The first KL divergence pertains to estimating the coverage of $\lf_i$, while the second pertains to estimating the accuracy parameter of $\lf_i$. 
The first term in~\eqref{eq:lf_i_gen_err} can be written as
\begin{align*}
&\mathbb{E}_x [D_{\text{KL}}( \Pr(z_i | x) || \hat{\Pr}(z_i | C(x)))] = \text{KL}_{C(x)}(z_i) \\
& + \E{x}{\Pr(\lf_i = 0 | x) \log \frac{\Pr(\lf_i = 0 | C(x))}{\hat{\Pr}(\lf_i = 0 | C(x))} + \Pr(\lf_i \neq 0 | x) \log \frac{\Pr(\lf_i \neq 0| C(x))}{\hat{\Pr}(\lf_i \neq 0 | C(x))}} \\
&= \text{KL}_{C(x)}(z_i) + \E{C(x)}{\Pr(\lf_i = 0 | C(x)) \log \frac{\Pr(\lf_i = 0 | C(x))}{\hat{\Pr}(\lf_i = 0 | C(x))} + \Pr(\lf_i \neq 0 | C(x)) \log \frac{\Pr(\lf_i \neq 0 | C(x))}{\hat{\Pr}(\lf_i \neq 0 | C(x))}} \\
&= \text{KL}_{C(x)}(z_i) + \E{C(x), z_i}{\log \frac{\Pr(z_i | C(x))}{\hat{\Pr}(z_i | C(x))}},
\end{align*}

The second term in~\eqref{eq:lf_i_gen_err} can be written as
\begin{align*}
\E{}{\Pr(\lf_i \neq 0 | x) \log \frac{\Pr(\lf_i | y, x, \lf_i \neq 0)}{\hat{\Pr}(\lf_i | y, C(x), \lf_i \neq 0)}}  &\le \E{x, y, \lf_i \neq 0}{\log \bigg(\frac{\Pr(\lf_i | y, x, \lf_i \neq 0)}{\Pr(\lf_i | y, C(x), \lf_i \neq 0)} \cdot  \frac{\Pr(\lf_i | y, C(x), \lf_i \neq 0)}{\hat{\Pr}(\lf_i | y, C(x), \lf_i \neq 0)}\bigg)} \\
&= \text{KL}_{C(x)}(\lf_i | y, \lf_i \neq 0) + \E{x, y, \lf_i \neq 0}{\log  \frac{\Pr(\lf_i | y, C(x), \lf_i \neq 0)}{\hat{\Pr}(\lf_i | y, C(x), \lf_i \neq 0)}}.
\end{align*}

Putting everything together in~\eqref{eq:gen_bound}, the generalization error is at most
\begin{align*}
L(\bm{\lf}) &\le \sum_{i = 1}^m  \bigg(\E{C(x), z_i}{\log \frac{\Pr(z_i | C(x))}{\hat{\Pr}(z_i | C(x))}} + \E{x, y, \lf_i \neq 0}{\log  \frac{\Pr(\lf_i | y, C(x), \lf_i \neq 0)}{\hat{\Pr}(\lf_i | y, C(x), \lf_i \neq 0)}} \\
&+ \text{KL}_{C(x)}(z_i) + \text{KL}_{C(x)}(\lf_i | y, \lf_i \neq 0)\bigg) +  \text{KL}_{C(x)}(y) + H(y | \bm{\lf}, x).
\end{align*}

We can interpret the generalization error as consisting of bias, variance (and irreducible error) coming from 1) estimating the coverage of a weak source over a part, and then, conditioned on the support of a source, 2) estimating the accuracy of the source over a part. The bias is from using $C(x)$ instead of $x$, and the variance is from estimating over the dataset over these two steps.

Using Lemmas~\ref{lemma:coverage_estimation},~\ref{lemma:accuracy_estimation}, and~\ref{lemma:kl_biases} we get our desired bound. 
\end{proof}

\begin{lemma}
The sampling error term coming from estimating $\lf_i$'s coverage, $\E{C(x), z_i}{\log \frac{\Pr(z_i | C(x))}{\hat{\Pr}(z_i | C(x))}}$, where $z_i = \ind{\lf_i = 0}$, is equal to
\begin{align*}
\E{C(x), z_i}{\log \frac{\Pr(z_i | C(x))}{\hat{\Pr}(z_i | C(x))}} = \frac{s}{n} + o(1/n).
\end{align*}
\label{lemma:coverage_estimation}
\end{lemma}

\begin{proof}
We can write this expectation across each $C_j$. Denote $p_{ij} = \Pr(\lf_i \neq 0 | C_j)$ as $\lf_i$'s coverage on $C_j$, and equivalently $\phat_{ij}$ as its estimate over $\D$. Then,
\begin{align}
\E{C(x), z_i}{\log \frac{\Pr(z_i | C(x))}{\hat{\Pr}(z_i | C(x))}} = \sum_{j = 1}^s \Pr(f(x) \in C_j) \E{\D}{p_{ij} \log \frac{p_{ij}}{\phat_{ij}} + (1 - p_{ij})\log \frac{1 - p_{ij}}{1 - \phat_{ij}}}. \label{eq:coverage_parts}
\end{align}

Performing a Taylor approximation of $g(x) = \log \frac{c}{x}$ at $x=c$ gives us $\log \frac{c}{x} \approx \log 1 + -\frac{1}{c}(x - c) + \frac{1}{2c^2}(x - c)^2$. Setting $x = \phat_{ij}, 1 - \phat_{ij}$ and $c = p_{ij}, 1 - p_{ij}$ respectively in~\eqref{eq:coverage_parts} and using the fact that $\phat_{ij}$ is an unbiased estimate of $p_{ij}$, this expression becomes
\begin{align*}
\E{C(x), z_i}{\log \frac{\Pr(z_i | C(x))}{\hat{\Pr}(z_i | C(x))}} &= \sum_{j = 1}^s \Pr(f(x) \in C_j) \frac{1}{p_{ij}(1 - p_{ij})} \E{}{(p_{ij} - \phat_{ij})^2} + o(1/n) \\
&= \sum_{j = 1}^s \Pr(f(x) \in C_j) \frac{1}{p_{ij}(1 - p_{ij})} \Var{}{\phat_{ij}} + o(1/n),
\end{align*}

where we use the fact that the Taylor remainder scales in $\E{}{(\phat_{i, j} - p_{i, j})^3 \big| C_j} \sim \mathcal{O}(1/n^2)$. We can simplify the variance
$\Var{}{\phat_{i, j}} = \Var{}{\frac{1}{n'} \sum_{x: f(x) \in C_j} \ind{\lf_i(x) \neq 0}} = \frac{1}{(n')^2} \sum_{x: f(x) \in C_j}  \Var{}{\ind{\lf_i(x) \neq 0}} = \frac{p_{i, j}(1 - p_{i, j})}{n'}$. Putting this all together, we have
\begin{align*}
\E{C(x), z_i}{\log \frac{\Pr(z_i | C(x))}{\hat{\Pr}(z_i | C(x))}} = \sum_{j = 1}^s \Pr(f(x) \in C_j) \frac{1}{n'} + o(1/n) = \frac{s}{n} + o(1/n).
\end{align*}

\end{proof}

\begin{lemma}
Define $p_{ij} = \Pr(\lf_i \neq 0 | C_j)$ as the coverage of the $\lf_i$ on $C_j$. The sampling error term coming from estimating source accuracy of $\lf_i$, $\Pr(\lf_i| y, C(x), \lf_i \neq 0)$, is at most
\begin{align*}
\E{x, y, \lf_i \neq 0}{\log  \frac{\Pr(\lf_i | y, C(x), \lf_i \neq 0)}{\hat{\Pr}(\lf_i | y, C(x), \lf_i \neq 0)}} \le \E{C_j}{\frac{1}{p_{ij}} \; \Big| \; p_{ij} \neq 0} \cdot \frac{3 s }{8 n} \cdot \frac{1 - \bmin^2}{\bmin^2 (1 - a_{\max}^2)} \bigg(\frac{1}{\bmin^4} + \frac{2}{\bmin^2} \bigg) + o(1/n).
\end{align*}
\label{lemma:accuracy_estimation}
\end{lemma}

\begin{proof}
Define $\mathcal{C}_i \subseteq \mathcal{C}$ to be the subsets where $\lf_i$ has non-zero coverage, $\{C \in \mathcal{C}: \exists x: f(x) \in C, \lf_i(x) \neq 0 \}$. When there are subsets with no $\lf_i$ coverage, we do not estimate the accuracy and can discard them from this bound. We can thus write the above expectation as $\E{}{\log  \frac{1 + \sgn(\lf_i y) \cdot a_i(C(x))}{1 + \sgn(\lf_i y) \cdot \hat{a}_i(C(x))}} = \sum_{C_j \in \mathcal{C}_i} \Pr(f(x) \in C_j) \E{}{\log \frac{1 + \sgn(\lf_i y) \cdot a_i(C_j)}{1 + \sgn(\lf_i y) \cdot \hat{a}_i(C_j)} \Big| C_j}$. We can decompose the expectation as
\begin{align}
 \E{x, y, \lf_i \neq 0}{\log \frac{1 + \sgn(\lf_i y) \cdot a_i(C_j)}{1 + \sgn(\lf_i y) \cdot \hat{a}_i(C_j)} \bigg| C_j} &= \E{}{\log \frac{1 + a_i(C_j)}{1 + \hat{a}_i(C_j)} \bigg| C_j} \Pr(\lf_i y \texttt{=} 1 | C_j, \lf_i \neq 0)\\
 &+\E{}{\log \frac{1 - a_i(C_j)}{1 - \hat{a}_i(C_j)} \bigg| C_j} \Pr(\lf_i y \texttt{=} -1 | C_j, \lf_i \neq 0). \label{eq:param_err_decomposition}
\end{align}

$\Pr(\lf_i y = 1 | C_j, \lf_i \neq 0)$ is equal to $\frac{1 + a_i(C_j)}{2}$. \eqref{eq:param_err_decomposition} becomes
\begin{align}
 \E{}{\log \frac{1 + \sgn(\lf_i y) \cdot a_i(C_j)}{1 + \sgn(\lf_i y) \cdot \hat{a}_i(C_j)} \bigg| C_j} &\texttt{=} \frac{1}{2} \bigg((1 + a_i(C_j)) \E{}{\log \frac{1 + a_i(C_j)}{1 + \hat{a}_i(C_j)} \bigg| C_j} + (1 - a_i(C_j)) \E{}{\log \frac{1 - a_i(C_j)}{1 - \hat{a}_i(C_j)} \bigg| C_j}  \bigg). \label{eq:param_err_decomposition2}
\end{align}

Again, we can perform a Taylor expansion on $g(x) = \log \frac{1 + c}{1 + x}$ at $x = c$ to get that $\log \frac{1 + c}{1 + x} \approx -\frac{1}{1+c}(x - c) + \frac{1}{2(1+c)^2}(x - c)^2$, and therefore $\E{}{\log \frac{1 + a_i(C_j)}{1 + \hat{a}_i(C_j)} \Big| C_j} = \frac{\E{}{a_i(C_j) - \hat{a}_i(C_j)}}{1 + a_i(C_j)} + \frac{\E{}{(\hat{a_i}(C_j) - a_i(C_j))^2}}{2(1 + a_i(C_j))^2} + o(1/n)$, (see Lemma 4 of~\cite{chen2021comparing} for bounding the Taylor remainder). Similarly, we have that $\E{}{\log \frac{1 - a_i(C_j)}{1 - \hat{a}_i(C_j)} \Big| C_j} = \frac{\E{}{\hat{a}_i(C_j) - a_i(C_j)}}{1 - a_i(C_j)} + \frac{\E{}{(\hat{a}_i(C_j) - a_i(C_j))^2}}{2(1 - a_i(C_j))^2} + o(1/n)$. Therefore,~\eqref{eq:param_err_decomposition2} becomes
\begin{align*}
 \E{}{\log \frac{1 + \sgn(\lf_i y) \cdot a_i(C_j)}{1 + \sgn(\lf_i y) \cdot \hat{a}_i(C_j)} \bigg| C_j} &= \frac{1}{2} \bigg(\E{}{a_i(C_j) - \hat{a}_i(C_j)} + \frac{\E{}{(\hat{a}_i(C_j) - a_i(C_j))^2}}{2(1 + a_i(C_j))} \\
 &+ \E{}{\hat{a}_i(C_j) - a_i(C_j)} + \frac{\E{}{(\hat{a}_i(C_j) - a_i(C_j))^2}}{2(1 - a_i(C_j))} \bigg) + o(1/n) \\
 &= \frac{1}{2} \cdot \frac{\E{}{(\hat{a}_i(C_j) - a_i(C_j))^2}}{1 - a_i(C_j)^2} + o(1/n).
\end{align*}

The value of $\E{}{(\hat{a}_i(C_j) - a_i(C_j))^2}$ has been studied in previous works that use the triplet method of~\cite{fu2020fast}. In particular, we use Lemma 6 of~\cite{chen2021comparing} to get that 
\begin{align*}
\E{}{(\hat{a}_i(C_j) - a_i(C_j))^2} \le \frac{3s }{4 p_{i, j} n} \cdot \frac{1 - \bmin^2}{\bmin^2} \bigg(\frac{1}{\bmin^4} + \frac{2}{\bmin^2} \bigg).
\end{align*}

 Therefore, the overall expression can be bounded by
\begin{align*}
\E{}{\log \frac{1 + \sgn(\lf_i y) \cdot a_i(C(x)) }{1 + \sgn(\lf_i y) \cdot \hat{a}_i(C(x)) }} &\le \sum_{C_j \in \mathcal{C}_i} \Pr(f(x) \in C_j) \frac{1}{2(1 - a_{\max}^2)} \cdot \frac{3s}{4 p_{i, j} n} \cdot \frac{1 - \bmin^2}{\bmin^2} \bigg(\frac{1}{\bmin^4} + \frac{2}{\bmin^2} \bigg) + o\Big(\frac{1}{n}\Big)\\
&\le \E{C_j}{\frac{1}{p_{ij}} \; \Big| \; p_{ij} \neq 0} \cdot \frac{3 s }{8 n} \cdot \frac{1 - \bmin^2}{\bmin^2 (1 - a_{\max}^2)} \bigg(\frac{1}{\bmin^4} + \frac{2}{\bmin^2} \bigg) + o\Big(\frac{1}{n} \Big).
\end{align*}

\end{proof}

\begin{lemma}
Denote $z_i = \ind{\lf_i = 0}$ and $\text{KL}_{C(x)}(\cdot) = \E{x}{D_{\text{KL}}( \Pr(\cdot | x), \Pr(\cdot | C(x)))}$. The bias terms from conditioning on $C(x)$ rather than $x$ are at most
\begin{align*}
&\text{KL}_{C(x)}(y) \le 2 K_y d_{\C} \\
&\text{KL}_{C(x)}(\lf_i | y, \lf_i \neq 0) \le 2 m K_{\lf} d_{\C} \\
&\text{KL}_{C(x)}(z_i) \le 2 m K_{\lf, 0} d_{\C}.
\end{align*}
\label{lemma:kl_biases}
\end{lemma}

\begin{proof}
We can write the expected KL-divergence between the distribution of the true label $y$ conditioned on $C(x)$ versus $x$ as 
\begin{align}
\text{KL}_{C(x)}(y) = \E{x}{D_{\text{KL}}(\Pr(y|x) || \Pr(y | C(x)))} = \sum_{j = 1}^s \Pr(f(x) \in C_j) \int \Pr(x | C_j) D_{\text{KL}}( \Pr(y | x, C_j) || \Pr(y | C_j)) dx. \label{eq:y_kl}
\end{align} 

This inner KL-divergence is on two Bernoulli distributions. Define $p_{y, j} = \Pr(y = 1 |C_j)$, and denote $p_{y, x, j} = \Pr(y = 1 | x, C_j)$. Then, $D_{\text{KL}}(\Pr(y | x, C_j) || \Pr(y | C_j)) = p_{y, x, j} \log \frac{p_{y, x, j}}{p_{y, j}} + (1 - p_{y, x, j}) \log \frac{1 - p_{y, x, j}}{1 - p_{y, j}}$. 

Next, recall that $\Pr(y | x)$ is $K_y$-Lipschitz in the embedding space; that is, $|\Pr(y = 1 | x) - \Pr(y = 1 | x')| \le K_y \rho(f(x), f(x'))$. Since $p_{y, j}$ is $\Pr(y | x)$ averaged over $C_j$, it holds that $|p_{y, x, j} - p_{y, j}| \le K_y d_j$, where $d_j$ is the diameter of $C_j$. We then have that $p_{y, x, j} \le K_y d_j + p_{y, j}$, and since $|(1 - p_{y, x, j}) - (1 - p_{y, j}) | \le K_y d_j$, we also have that $1 - p_{y, x, j} \le 1 - p_{y, j} + K_y d_j$. Therefore, the KL-divergence is bounded by
\begin{align*}
D_{\text{KL}}(\Pr(y | x, C_j) || \Pr(y | C_j)) &\le p_{y, x, j} \log \frac{K_y d_j + p_{y, j}}{p_{y, j}} + (1 - p_{y, x, j}) \log \frac{K_y d_j + (1 - p_{y, j})}{1 - p_{y, j}} \\
&\le p_{y, x, j} \cdot \frac{K_y d_j}{p_{y, j}} + (1 - p_{y, x, j}) \cdot \frac{K_y d_j}{1 - p_{y, j}},
\end{align*}

where we use the fact that $\log(1 + x) \le x$. Plugging this back into~\eqref{eq:y_kl},
\begin{align*}
\E{x}{D_{\text{KL}}(\Pr(y|x) || \Pr(y | C(x)))} &\le \sum_{j = 1}^s \Pr(f(x) \in C_j) \int \Pr(x | C_j) ) \bigg(p_{y, x, j} \cdot \frac{K_y d_j}{p_{y, j}} + (1 - p_{y, x, j}) \cdot \frac{K_y d_j}{1 - p_{y, j}}\bigg) dx \\
&=  \sum_{j = 1}^s \Pr(f(x) \in C_j) \int \Pr(x, y = 1 | C_j) \cdot \frac{K_y d_j}{p_{y, j}} + \Pr(x, y = -1 | C_j) \cdot \frac{K_y d_j}{1 - p_{y, j}} dx \\
&=  \sum_{j = 1}^s \Pr(f(x) \in C_j)\bigg(\Pr( y = 1 | C_j) \cdot \frac{K_y d_j}{p_{y, j}} + \Pr(y = -1 | C_j) \cdot \frac{K_y d_j}{1 - p_{y, j}}\bigg) \\
&= \sum_{j = 1}^s \Pr(f(x) \in C_j) \cdot 2 K_y d_j = 2 K_y d_{\C}.
\end{align*}

Next, we bound $\text{KL}_{C(x)}(\lf_i | y, \lf_i \neq 0)$. Using the same approach, we have that $\text{KL}_{C(x)}(\lf_i | y, \lf_i \neq 0) \le 2 K_{\lf} d_{\C}$. We also have that $\text{KL}_{C(x)}(z_i) \le 2 K_{\lf, 0} d_{\C}$.
\end{proof}

\subsection{Proofs for Section~\ref{subsec:lift}}

\begin{lemma}
When we use $\bm{\lfbar}$ instead of $\bm{\lf}$, the bias term in $L(\bm{\lfbar})$ is at most
\begin{align*}
Bias \le 2d_{\C}K_y + 2m(d_{\C} + 2 \max_i r_i) (K_{\lf} + K_{\lf, 0}).
\end{align*}
\label{lemma:ext_bias}
\end{lemma}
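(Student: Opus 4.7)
The plan is to mirror the proof of Lemma~\ref{lemma:kl_biases}, which bounded the bias in Theorem~\ref{thm:gen_err}, and track how each of the three KL-divergence terms changes when $\bm{\lf}$ is replaced by $\bm{\lfbar}$. Applying the same decomposition used in the proof of Theorem~\ref{thm:gen_err} to $L(\bm{\lfbar})$, the bias consists of (i) $\E{x}{D_{\text{KL}}(\Pr(y|x) \| \Pr(y|C(x)))}$, (ii) $\sum_{i=1}^m \E{x,y,\lfbar_i\neq 0}{D_{\text{KL}}(\Pr(\lfbar_i|y,x,\lfbar_i\neq 0) \| \Pr(\lfbar_i|y,C(x),\lfbar_i\neq 0))}$, and (iii) $\sum_{i=1}^m \E{x}{D_{\text{KL}}(\Pr(\lfbar_i\neq 0|x) \| \Pr(\lfbar_i\neq 0|C(x)))}$. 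Term (i) is untouched by extension, so the original bound $2 K_y d_{\C}$ carries over.

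For terms (ii) and (iii), the key structural observation is that, conditional on $\D$, for any $x$ with $\lfbar_i(x) \neq 0$ there exists a point $z_i(x) \in \X$ such that $\rho(f(x), f(z_i(x))) \le r_i$ and $\lfbar_i(x)$ has the same conditional law given $y$ as $\lf_i(z_i(x))$, namely $z_i(x) = x$ when $\lf_i(x) \neq 0$ and $z_i(x) = \NN(x)$ otherwise. By the triangle inequality, for any $x, x' \in C_j$ with $\lfbar_i, \lfbar_i' \neq 0$ we have $\rho(f(z_i(x)), f(z_i(x'))) \le r_i + d_j + r_i = d_j + 2 r_i$. Invoking source and coverage Lipschitzness of the original $\lf_i$ between $z_i(x)$ and $z_i(x')$ then yields the effective inequalities $|\Pr(\lfbar_i=1|y,\lfbar_i\neq 0,x) - \Pr(\lfbar_i=1|y,\lfbar_i\neq 0,x')| \le K_\lf (d_j + 2 r_i)$ and $|\Pr(\lfbar_i \neq 0|x) - \Pr(\lfbar_i \neq 0|x')| \le K_{\lf,0}(d_j + 2 r_i)$.

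From here I would substitute these two augmented Lipschitz inequalities into the Bernoulli-KL calculation used in Lemma~\ref{lemma:kl_biases}: the $|p_{y,x,j} - p_{y,j}| \le K_y d_j$ step is replaced by its $\lfbar_i$ counterpart with diameter $d_j + 2 r_i$, and the $\log(1+x) \le x$ expansion and subsequent averaging go through without change. This produces $\text{(ii)} \le 2 m K_\lf (d_{\C} + 2 \max_i r_i)$ and $\text{(iii)} \le 2 m K_{\lf,0}(d_{\C} + 2\max_i r_i)$ after bounding each $r_i$ uniformly by $\max_i r_i$ inside the sum. Adding the three pieces gives the claimed $2 d_{\C} K_y + 2m(d_{\C} + 2\max_i r_i)(K_\lf + K_{\lf,0})$.

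The main obstacle I anticipate is the data-dependence of $\NN(x)$: the auxiliary point $z_i(x)$ is measurable with respect to $\D$, and can lie outside $C(x)$, so one has to be careful that taking expectations over $\D$ in $L(\bm{\lfbar})$ still allows the bound to be applied pointwise. The clean workaround is to do the entire argument conditionally on $\D$, where $\NN$ is deterministic and the bound $\rho(f(x), f(\NN(x))) \le r_i$ is guaranteed by the definition of $\lfbar_i$ in~\eqref{eq:extended} whenever $\lfbar_i(x) \neq 0$; since the resulting bound depends on $\D$ only through $d_{\C}$ and $\max_i r_i$, taking a subsequent expectation over $\D$ leaves it unchanged. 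A secondary subtlety is that the sampling-error (variance) and irreducible-error parts of $L(\bm{\lfbar})$ are handled separately (in Theorems~\ref{thm:gen_err} and~\ref{thm:lift} respectively), so the lemma truly only needs to track the three KL bias pieces above.
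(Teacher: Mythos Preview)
Your proposal is correct and follows essentially the same route as the paper: decompose the bias into the three KL pieces, leave the $y$-term untouched, and for the source and coverage terms observe that passing to $z_i(x)\in\{x,\NN(x)\}$ inflates the effective subset diameter from $d_j$ to $d_j+2r_i$, after which the Bernoulli-KL computation of Lemma~\ref{lemma:kl_biases} goes through verbatim. Your explicit $z_i(x)$ notation and the remark about conditioning on $\D$ to handle the data-dependence of $\NN$ are in fact cleaner than the paper's somewhat terse ``view the extensions as changing the diameter.''
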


\begin{proof}
The term $\E{x}{D_{\text{KL}}(\Pr(y | x) || \Pr(y | C(x))}$ in the bias is unchanged since the distribution of $y$ given $x$ is not impacted by $\bm{\lf}$. We next look at $\E{x, y, \lfbar_i \neq 0}{D_{\text{KL}}(\Pr(\lfbar_i | y, x, \lfbar_i \neq 0) || \Pr(\lfbar_i | y, C(x), \lfbar_i \neq 0))}$. Using the approach in Lemma~\ref{lemma:kl_biases}, recall that $\Pr(\lfbar_i | y, x, \lfbar_i \neq 0) = \Pr(\lf_i(x) | y, x, \lf_i(x) \neq 0)$  when $\lf_i(x) \neq 0$, and $\Pr(\lf_i(\NN(x)) | y, \NN(x), \lf_i(\NN(x)) \neq 0)$ when $\lf_i(x) = 0$. Therefore, by Assumption~\ref{assumption:lipschitzness}, 
\ifsinglecolumn
$| \Pr(\lfbar_i = 1 | y, \lfbar_i \neq 0, x) - \Pr(\lfbar_i = 1 | y, \lf_i \neq 0, x')| \le K_{\lf} \max\{\\ \rho(f(\NN(x)), f(\NN(x'))), \rho(f(x), f(\NN(x'))), \rho(f(\NN(x)), f(x')), \rho(f(x), f(x'))\}$. 
\else
$| \Pr(\lfbar_i = 1 | y, \lfbar_i \neq 0, x) - \Pr(\lfbar_i = 1 | y, \lf_i \neq 0, x')| \le K_{\lf} \max\{\rho(f(\NN(x)), f(\NN(x'))), \rho(f(x), f(\NN(x'))), \rho(f(\NN(x)), f(x')), \rho(f(x), f(x'))\}$. 
\fi
The greatest possible distance in embedding space between $\NN(x)$ and $\NN(x')$ when $f(x), f(x') \in C_j$ under our method of source extension is $d_j + 2r_i$. We can thus view the extensions as changing the diameter of the subset in Lemma~\ref{lemma:kl_biases}. The rest of the approach remains unchanged, so we get that
\begin{align*}
\E{x, y, \lfbar_i \neq 0}{D_{\text{KL}}(\Pr(\lfbar_i | y, x, \lfbar_i \neq 0) || \Pr(\lfbar_i | y, C(x), \lfbar_i \neq 0))} &\le 2 K_{\lf}(d_{\C} + 2r_i).
\end{align*}

We consider $\E{x}{D_{\text{KL}}(\Pr(\lf_i \neq 0 | x) || \Pr(\lf_i \neq 0 | C(x)))}$. Similarly, $\Pr(\lfbar_i(x) \neq 0 | x)$ is either $\Pr(\lf_i(x) \neq 0 | x)$ or $\Pr(\lf_i(\NN(x)) \neq 0 | \NN(x))$ depending on the region $x$ is in. Therefore, 
\begin{align*}
\E{x}{D_{\text{KL}}(\Pr(\lf_i \neq 0 | x) || \Pr(\lf_i \neq 0 | C(x)))} &\le 2 K_{\lf, 0} (d_{\C} + 2r_i),
\end{align*}

and we obtain the desired bound.
\end{proof}

\extendedacc*

\begin{proof}
We first introduce some notation. Define $S = \{x \in \X: \lf_i(x) \neq 0 \}$ as the support of $\lf_i$, and $\hat{S} = S \cap \D$ as the set of points in $\D$ that $\lf_i$ has coverage on. In particular, $\hat{S}$ consists of points sampled from $\p_{\lf_i}$, and suppose that $|\hat{S}| = n_0$. Define the extended region as $\hat{S}_{r_i} = \{ x \in \X \backslash S: \exists x' \in \hat{S} \; \text{s.t.} \; \rho(f(x), f(x')) \le r_i\}$, and let the distribution of $x$ over this support be $\p_{\hat{S}, r} = \Pr(x | x \in \hat{S}_{r_i})$. With slight abuse of notation, we also use $\p_{\hat{S}, r_i}$ to refer to the joint distribution over $x, y$ with $x$ from $\p_{\hat{S}, r}$. We also use $\hat{S}_{r_i}$ to refer to the support $\hat{S}_{r_i} \times \Y$.

Define the expected error $\varepsilon =  \E{\hat{S} \sim \p_{\lf_i}^{n_0}}{ \Pr_{x, y \sim \p_{\hat{S}, r_i}}(\lfbar_i \neq y| x, \lfbar_i(x) \neq 0)} =  \E{\hat{S} \sim \p_{\lf_i}^{n_0}}{ \Pr_{x, y \sim \p_{\hat{S}, r_i}}(\lfbar_i \neq y| x)}$. Let $\hat{S}$ also be written as a set of $n_0$ random variables $\{x_1, \dots, x_{n_0}\}$. Denote $\NN_{\hat{S}}(x) = \argmin{x' \in \hat{S}}{ \rho(f(x), f(x'))}$ to be $x$'s nearest neighbor in $\hat{S}$ (in the body, this is just referred to as $\NN(x)$), so $\lfbar_i(x) := \lf_i(\NN_{\hat{S}}(x))$ for $x \in \hat{S}_{r_i}$. Then, we decompose $\varepsilon$ based on which point in $\hat{S}$ is $x$'s nearest neighbor:
\begin{align}
\varepsilon &= \Pr_{\substack{\hat{S} \sim \p_{\lf_i}^{n_0}\\ x, y \sim \p_{\hat{S}, r_i}}} (\lf_i(\NN_{\hat{S}}(x)) \neq y | x) = \sum_{j = 1}^{n_0} \Pr_{\substack{\hat{S}\sim \p_{\lf_i}^{n_0}, \\ x \sim \p_{\hat{S}, r_i}}}(\NN_{\hat{S}}(x) = x_j) \cdot \;\; \Pr_{\mathclap{\substack{\hat{S} \sim \p_{\lf_i}^{n_0} \\ x, y, \sim \p_{\hat{S}, r_i}}}} \;\;(\lf_i(x_j) \neq y | \NN_{\hat{S}}(x) = x_j). \label{eq:knn2}
\end{align}

 Let $y_j$ denote the label corresponding to $x_j$, drawn from $\p_{\lf_i}(\cdot | x_j)$. The probability $\Pr_{\hat{S} \sim \p_{\lf_i}^{n_0}, x, y, \sim \p_{\hat{S}, r_i}}(\lf_i(x_j) \neq y | \NN_{\hat{S}}(x) = x_j)$ can be further decomposed into two cases: when $\lf(x_j) = y_j, y_j \neq y$ and when $\lf(x_j) \neq y_j, y_j = y$. That is,
\begin{align}
&\Pr_{\mathclap{\substack{\hat{S} \sim \p_{\lf_i}^{n_0} \\ x, y, \sim \p_{\hat{S}, r_i}}}} \;(\lf_i(x_j) \neq y | \NN_{\hat{S}}(x) = x_j) \label{eq:knn1}\\
= & \;\;\Pr_{\mathclap{\substack{\hat{S} \sim \p_{\lf_i}^{n_0} \\ x, y \sim \p_{\hat{S}, r_i}, \\ y_j \sim \p_{\lf_i}(\cdot | x_j)}}} \;\;(\lf_i(x_j) = y_j, y_j \neq y | \NN_{\hat{S}}(x) = x_j) + \;\;\Pr_{\mathclap{\substack{\hat{S} \sim \p_{\lf_i}^{n_0} \\ x, y \sim \p_{\hat{S}, r_i}, \\ y_j \sim \p_{\lf_i}(\cdot | x_j)}}} \;\;(\lf_i(x_j) \neq y_j, y_j = y | \NN_{\hat{S}}(x) = x_j) \nonumber \\
\le & \;\;\Pr_{\mathclap{\substack{\hat{S} \sim \p_{\lf_i}^{n_0}, \\ y_j \sim \p_{\lf_i}(\cdot | x_j)}}} \;\;(\lf_i(x_j) = y_j, \exists (x, y) \in \hat{S}_{r_i}: \NN_{\hat{S}}(x) = x_j, y_j \neq y) + \;\Pr_{\mathclap{\substack{\hat{S} \sim \p_{\lf_i}^{n_0}, \\ y_j \sim \p_{\lf_i}(\cdot | x_j)}}}\;\;(\lf_i(x_j) \neq y_j, \exists (x, y) \in \hat{S}_{r_i}: \NN_{\hat{S}}(x) = x_j, y_j = y). \nonumber
\end{align}

Next, we recall the definition of $\hat{S}_{r_i}$ and observe that $\NN_{\hat{S}}(x) = x_j$ implies that $\rho(f(x), f(x')) \le r_i$. These allow us to write the probability only over one $(x_j, y_j) \sim \p_{\lf_i}$ rather than $\hat{S}$, and so the expression in~\eqref{eq:knn1} satisfies
\begin{align*}
\Pr_{\mathclap{\substack{\hat{S} \sim \p_{\lf_i}^{n_0} \\ x, y, \sim \p_{\hat{S}, r_i}}}} \;\;(\lf_i(x_j) \neq y | \NN_{\hat{S}}(x) = x_j) \le &\Pr_{x_j, y_j \sim \p_{\lf_i}}(\lf_i(x_j) = y_j, \exists (x, y) \in \X \backslash S: \rho(f(x_j), f(x)) \le r_i, y_j \neq y) \\
+ &\Pr_{x_j, y_j \sim \p_{\lf_i}}(\lf_i(x_j) \neq y_j, \exists (x, y) \in \X \backslash S: \rho(f(x_j), f(x)) \le r_i, y_j = y).
\end{align*}

The first probability on the RHS can be written as $\Pr_{x_j, y_j \sim \p_{\lf_i}}(\lf_i(x_j) = y_j | \exists (x, y) \in \X \backslash S: \rho(f(x_j), f(x)) \le r_i, y_j \neq y) \Pr_{x_j, y_j \sim \p_{\lf_i}}(\exists (x, y) \in \X \backslash S: \rho(f(x_j), f(x)) \le r_i, y_j \neq y) \le \frac{1 + \beta_i}{2} M(r_i)$, and the second one is at most $\Pr_{x_j, y_j \sim \p_{\lf_i}}(\lf_i(x_j) \neq y_j) = \frac{1 - a_i}{2}$. Therefore, putting this back into~\eqref{eq:knn2}, $\varepsilon \le \frac{1 + \beta_i}{2} M(r_i) + \frac{1 - a_i}{2}$. Since $\bar{a}_i(r_i) = 2(1 - \varepsilon) - 1$, we now have our desired bound 
\begin{align*}
\bar{a}_i(r_i) \ge a_i - (1 + \beta_i)M(r_i).
\end{align*}

\end{proof}

\lift*

We aim to lower bound $H(y | \bm{\lf}, x) - H(y | \bm{\lfbar}, x)$ where only $\lf_i$ is extended to be $\lfbar_i$ with threshold radius $r_i$. 
\begin{align}
H(y &| \bm{\lf}, x) - H(y | \bm{\lfbar}, x) = \E{x ,y, \bm{\lf}}{-\log \Pr(y | \bm{\lf}, x)} + \E{x, y, \bm{\lfbar}}{\log \Pr(y | \bm{\lfbar}, x)} \nonumber \\
&= \E{x, y, \lf_{-i}}{ \E{\lfbar_i}{\log \frac{\Pr(\lfbar_i | x, y) \Pr(\lf_{-i} | x, y) \Pr(y | x)}{\Pr(\lfbar_i, \lf_{-i} | x)} \Big| x, y}  - \E{\lf_i}{\log\frac{\Pr(\lf_i | x, y) \Pr(\lf_{-i} | x, y) \Pr(y | x)}{\Pr(\lf_i, \lf_{-i} | x)}  \Big| x, y}}. \label{eq:H_diff1}
\end{align}

$\Pr(\lf_{-i} | x, y)$ and $\Pr(y | x)$ are the same when using $\lfbar_i$ versus $\lf_i$, so~\eqref{eq:H_diff1} becomes
\begin{align}
H(y | \bm{\lf}, x) - H(y | \bm{\lfbar}, x) = \E{x, y, \lf_{-i}}{ \E{\lfbar_i}{\log \frac{\Pr(\lfbar_i | x, y)}{\Pr(\lfbar_i, \lf_{-i} | x)} \Big| x, y}  - \E{\lf_i}{\log\frac{\Pr(\lf_i | x, y) }{\Pr(\lf_i, \lf_{-i} | x)} \Big| x, y} }. \label{eq:H_diff}
\end{align}

When extending $\lf_i$, there are three regions of interest in input space: $\lf_i(x), \lfbar_i(x) \neq 0$; $\lf_i(x) = 0, \lfbar_i(x) \neq 0$; and $\lf_i(x) = \lfbar_i(x) = 0$. In the first region, $\lfbar_i$ has the exact same behavior as $\lf_i$ since $\lf_i$ has coverage over this region. Therefore, conditioning on $\lf_i(x) \neq 0$, the expectation on the RHS of~\eqref{eq:H_diff} is equal to $0$. Similarly, in the third region where $\lf_i(x) = \lfbar_i(x) = 0$, the extended and original labeler vote exactly the same, so the expectation on the RHS of~\eqref{eq:H_diff} is again equal to $0$. The primary region of interest are the points that previously had no signal from $\lf_i$ but now have signal from $\lfbar_i$. Then,~\eqref{eq:H_diff} becomes
\begin{align}
H(y | \bm{\lf}, x) - H(y | \bm{\lfbar}, x) = p_i \E{y, \lf_{-i}, \lfbar_i(x) \neq 0, \lf_i(x) = 0}{\log \frac{\Pr(\lfbar_i | x, y)}{\Pr(\lfbar_i, \lf_{-i} | x)}   - \log\frac{\Pr(\lf_i = 0| x, y) }{\Pr(\lf_i = 0, \lf_{-i} | x)}}. \label{eq:H_diff2}
\end{align}

We can write $\frac{\Pr(\lf_i = 0| x, y) }{\Pr(\lf_i = 0, \lf_{-i} | x)} = \frac{\Pr(\lf_i = 0 | x)}{\Pr(\lf_i = 0 | x) \Pr(\lf_{-i} | x)} = \frac{1}{\Pr(\lf_{-i} | x)}$ by decomposing the denominator conditional on $y$ and using Lemma~\ref{lemma:abstain}. Using the chain rule on $\Pr(\lfbar_i, \lf_{-i}| x) = \Pr(\lfbar_i | \lf_{-i}, x) \Pr(\lf_{-i} | x)$,~\eqref{eq:H_diff2} is now
\begin{align*}
H(y | \bm{\lf}, x) - H(y | \bm{\lfbar}, x) =p_i \E{y, \lf_{-i}, \lfbar_i(x) \neq 0, \lf_i(x) = 0}{\log \frac{\Pr(\lfbar_i | x, y)}{\Pr(\lfbar_i| \lf_{-i},  x)}}.
\end{align*}

To analyze this expectation, we first look at the case where $y = 1$. Then,
\begin{align}
&\E{y = 1, \lf_{-i}, \lfbar_i(x) \neq 0, \lf_i(x) = 0}{\log \frac{\Pr(\lfbar_i | x, y)}{\Pr(\lfbar_i| \lf_{-i},  x)}} = \label{eq:H_diffy1}\\
\mathbb{E}\bigg[&\Pr(\lfbar_i = 1 | y = 1, x, \lfbar_i \neq 0) \log \frac{\Pr(\lfbar_i = 1 | x, y = 1)}{\Pr(\lfbar_i = 1 | \lf_{-i}, x)} + \Pr(\lfbar_i = -1 | y = 1, x, \lfbar_i \neq 0) \log \frac{\Pr(\lfbar_i = -1 | x, y = 1)}{\Pr(\lfbar_i = -1 | \lf_{-i}, x)} \bigg].\nonumber  
\end{align}

Denote $\alpha_i(x) = \Pr(\lfbar_i = 1 | y = 1, x, \lfbar_i \neq 0)$ as the probability corresponding to $\lfbar_i$'s accuracy parameter. In addition, note that we can write 
\begin{align*}
\Pr(\lfbar_i = 1 | \lf_{-i}, x) &= \Pr(\lfbar_i = 1 | \lf_{-i}, x, y = 1) \Pr(y = 1 | \lf_{-i}, x) + \Pr(\lfbar_i = 1 | \lf_{-i}, x, y = -1) \Pr(y = -1 | \lf_{-i}, x) \\
&= \alpha_i(x) p(x, \lf_{-i}) + (1 - \alpha_i(x))(1 - p(x, \lf_{-i})),
\end{align*}

where $p(x, \lf_{-i})$ is shorthand for $\Pr(y = 1 | \lf_{-i}, x)$ (importantly, it does not depend on $\lfbar_i$) and likewise for $\Pr(\lfbar_i = -1 | \lf_{-i}, x) = \alpha_i(x) (1 - p(x, \lf_{-i})) + (1 - \alpha_i(x)) p(x, \lf_{-i})$. Our expression from~\eqref{eq:H_diffy1} is now
\begin{align}
\mathbb{E}_{y = 1, \lf_{-i}, \lfbar_i(x) \neq 0, \lf_i(x) = 0}\bigg[ &\alpha_i(x) \log \frac{\alpha_i(x)}{\alpha_i(x) p(x, \lf_{-i}) + (1 - \alpha_i(x))(1 - p(x, \lf_{-i}))} \label{eq:H_diffy1_2} \\
&+ (1 - \alpha_i(x)) \log \frac{1 - \alpha_i(x)}{\alpha_i(x) (1 - p(x, \lf_{-i})) + (1 - \alpha_i(x)) p(x, \lf_{-i})}\bigg]. \nonumber 
\end{align}

Note that the expression inside the expectation is convex in both $\alpha_i(x)$ and $p(x, \lf_{-i})$.
\ifsinglecolumn
Define 
\begin{align*}
\alpha_{i, 1} &= \E{y = 1, \lfbar_i(x) \neq 0, \lf_i(x) = 0}{\alpha_i(x)} \\
p_{\lf_{-i}, 1} &= \E{y' = 1, \lf_{-i}, \lfbar_i(x) \neq 0, \lf_i(x) = 0}{\Pr(y = y' | x, \lf_{-i})}.
\end{align*}

$\alpha_{i, 1}$ is the expected accuracy probability over the extended region when $y = 1$, and $p_{\lf_{-i}, 1}$ is the expected label model performance using just $\lf_{-i}$ over the extended region when $y = 1$.
\else
Define $\alpha_{i, 1} = \E{y = 1, \lfbar_i(x) \neq 0, \lf_i(x) = 0}{\alpha_i(x)}$ to be the expected accuracy probability over the extended region when $y = 1$, and $p_{\lf_{-i}, 1} = \E{y' = 1, \lf_{-i}, \lfbar_i(x) \neq 0, \lf_i(x) = 0}{\Pr(y = y' | x, \lf_{-i})}$ to be the expected label model performance using just $\lf_{-i}$ over the extended region when $y = 1$.
\fi
 Then, this expression from~\eqref{eq:H_diffy1_2} is at least
\begin{align}
&\alpha_{i, 1} \log \frac{\alpha_{i, 1}}{\alpha_{i, 1} p_{\lf_{-i}, 1} + (1 - \alpha_{i, 1})(1 - p_{\lf_{-i}, 1})} \label{eq:H_diffy1_3}\\
&+ (1 - \alpha_{i, 1}) \log \frac{1 - \alpha_{i, 1}}{\alpha_{i, 1} (1 - p_{\lf_{-i}, 1}) + (1 - \alpha_{i, 1}) p_{\lf_{-i}, 1}}. \nonumber 
\end{align}

We look at the case where $y = -1$. Similarly, we get
\begin{align}
\mathbb{E}_{y = -1, \lf_{-i}, \lfbar_i(x) \neq 0, \lf_i(x) = 0}\bigg[&\alpha_i(x) \log \frac{\alpha_i(x)}{\alpha_i(x)(1 - p(x, \lf_{-i})) + (1 - \alpha_i(x))p(x, \lf_{-i})} \label{eq:H_diffy-1} \\
&+ (1 - \alpha_i(x)) \log \frac{1 - \alpha_i(x)}{\alpha_i(x) p(x, \lf_{-i}) + (1 - \alpha_i(x))(1 - p(x, \lf_{-i}))}\bigg]. \nonumber
\end{align}

Again, define $\alpha_{i, -1} = \E{y = -1, \lfbar_i(x) \neq 0, \lf_i(x) = 0}{\alpha_i(x)}$ and $p_{\lf_{-i}, -1} = \E{y' = -1, \lf_{-i}, \lfbar_i(x) \neq 0, \lf_i(x) = 0}{\Pr(y = y' | x, \lf_{-i})}$, and by Jensen's inequality we have that~\eqref{eq:H_diffy-1} is at least
\begin{align}
&\alpha_{i, -1} \log \frac{\alpha_{i, -1}}{\alpha_{i, -1}p_{\lf_{-i}, -1} + (1 - \alpha_{i, -1})(1 - p_{\lf_{-i}, -1})} \label{eq:H_diffy-1_2}\\
&+ (1 - \alpha_{i, -1}) \log \frac{1 - \alpha_{i, -1}}{\alpha_{i, -1} (1 - p_{\lf_{-i}, -1}) + (1 - \alpha_{i, -1})p_{\lf_{-i}, -1}}. \nonumber 
\end{align}

Therefore, $\E{y, \lf_{-i}, \lfbar_i(x) \neq 0, \lf_i(x) = 0}{\log \frac{\Pr(\lfbar_i | x, y)}{\Pr(\lfbar_i | \lf_{-i}, x)}}$ is lower bounded by the weighted sum of $\Pr(y = 1 | \lf_{-i}, \lfbar_i(x) \neq 0, \lf_i(x) = 0)$ times~\eqref{eq:H_diffy1_3} and $\Pr(y = -1 | \lf_{-i}, \lfbar_i(x) \neq 0, \lf_i(x) = 0)$ times~\eqref{eq:H_diffy-1_2}.
Since~\eqref{eq:H_diffy1_3} and~\eqref{eq:H_diffy-1_2} are convex in $\alpha_{i, 1}, p_{\lf_{-i}, 1}$ and $\alpha_{i, -1}, p_{\lf_{-i}, 1}$ respectively, we can define $\alpha_i = \Pr(y = 1 | \lf_{-i}, \lfbar_i(x) \neq 0, \lf_i(x) = 0) \cdot \alpha_{i, 1} +  \Pr(y = -1 | \lf_{-i}, \lfbar_i(x) \neq 0, \lf_i(x) = 0) \cdot \alpha_{i, -1} =  \E{\lfbar_i(x) \neq 0, \lf_i(x) = 0}{\alpha_i(x)}$ as a notion of $\lfbar_i$'s accuracy in the region where we extend $\lf_i$. We also define $p_{\lf_{-i}} = \Pr(y = 1 | \lf_{-i}, \lfbar_i(x) \neq 0, \lf_i(x) = 0) \cdot p_{\lf_{-i}, 1} + \Pr(y = -1 | \lf_{-i}, \lfbar_i(x) \neq 0, \lf_i(x) = 0) \cdot p_{\lf_{-i}, -1} = \E{y', \lf_{-i}, \lfbar_i(x) \neq 0, \lf_i(x) = 0}{\Pr(y = y' | \lf_{-i}, x}$ as the label model's probability of outputting the correct label in our region of interest when relying on only $\lf_{-i}$. Then, we have that
\begin{align*}
H(y | \bm{\lf}, x) - H(y | \bm{\lfbar}, x) \ge p_i \bigg(&\alpha_i \log \frac{\alpha_i}{\alpha_i p_{\lf_{-i}} + (1 - \alpha_i)(1 - p_{\lf_{-i}})} \\
&+ (1 - \alpha_i) \log \frac{1 - \alpha_i}{(1 - \alpha_i) p_{\lf_{-i}} + \alpha_i (1 - p_{\lf_{-i}})} \bigg).
\end{align*}

We can lower bound the expression in the parentheses. Define $g(x) = x \log \frac{x}{xp + (1 - x)p} + (1 - x) \log \frac{1 - x}{(1 - x)p + x(1 - p)}$ for some constant $p$. We claim that $g(x) \ge h(x) = 8 (1 - p)^2 (x - 0.5)^2$ for $x \in [0, 1]$. Note that $g(0.5) = h(0.5) = 0$. To show that $g(x) \ge h(x)$, it suffices to show that $g'(x) > h'(x)$ for $x > 0.5$, and $g'(x) < h'(x)$ for $x < 0.5$. $g'(x) = \frac{1 - p}{xp + (1 - x)(1 - p)} + \frac{p - 1}{x(1 - p) + (1 - x)p} + \log \frac{x}{xp + (1 - x)(1 -p)} - \log \frac{1 - x}{(1-x)p + x(1- p)} $, and $h'(x) = 16(1 - p)^2 (x - 0.5)$. Again, note that $g'(0.5) = h'(0.5) = 0$, so we want to show that $g''(x) > h''(x)$ for all $x \in [0, 1]$.  $g''(x) = -\frac{(1 - p)(2p - 1)}{(xp + (1 - x)(1- p ))^2} - \frac{(p - 1)(1 - 2p)}{(x(1 - p) + (1 - x)p)^2} + \frac{1 - p}{x(xp + (1 - x)(1 - p))} + \frac{1 - p}{(1 - x)(x(1 - p) + (1 - x)p)}$, and $h''(x) = 16(1 - p)^2$. It is easy to check that $g''(x)$ obtains a minimum at $x = 0.5$. We compute that $g''(0.5) = 16(1 - p)^2$, which demonstrates that $g(x) \ge 8(1 - p)^2 (x - 0.5)^2$. We thus get
\begin{align*}
H(y | \bm{\lf}, x) - H(y | \bm{\lfbar}, x) \ge 8p_i(1 - p_{\lf_{-i}})^2 \Big( \alpha_i - \frac{1}{2} \Big)^2.
\end{align*}

We know that $\alpha_i = \frac{1 + \bar{a}_i(r_i)}{2}$, so our final bound is
\begin{align*}
H(y | \bm{\lf}, x) - H(y | \bm{\lfbar}, x) \ge 8p_i (1 - p_{\lf_{-i}})^2 \cdot  \frac{\bar{a}_i(r_i)^2}{4} = 2p_i (1 - p_{\lf_{-i}})^2 \cdot \bar{a}_i(r_i)^2.
\end{align*}

%!TEX root = ../main.tex

\section{Experimental Details}
\label{sec:supp_details}

We describe additional details about each task, 
including details about data sources (Section~\ref{sec:supp_details_dataset}),
supervision sources (Section~\ref{sec:supp_details_lfs}),
and setting extension thresholds (Section~\ref{sec:supp_details_thresholds}).

\subsection{Dataset Details}
\label{sec:supp_details_dataset}

%!TEX root = ../main.tex

\begin{table}[ht!]
    \centering
    \begin{tabular}{lcccccc}
        \toprule
        \textbf{Task} (Embedding) & $T$ & $m/T$ & \textbf{Prop} & $N_{train}$ & $N_{dev}$ & $N_{test}$   \\ \midrule
        \spam\                    & 1   & 10    & 0.49          & 1,586       & 120       & 250          \\
        \weather\                 & 1   & 103   & 0.53          & 187         & 50        & 50           \\
        \spouse\                  & 1   & 9     & 0.07          & 22,254      & 2,811     & 2,701        \\
        \basketball\              & 8   & 4     & 0.12          & 3,594       & 212       & 244          \\ 
        \commercial\              & 3   & 4     & 0.32          & 64,130      & 9,479     & 7,496        \\
        \tennis\                  & 9  & 6     & 0.34          & 6,959       & 746       & 1,098        \\
        \bottomrule
    \end{tabular}
    \caption{
    Details for each dataset.
    $T$: the number of related elements modeled by the weak supervision label
    model.
    $m/T$: the number of supervision sources per element.
    \textbf{Prop}: The proportion of positive examples in each dataset.
    $N_{train}$: The size of the unlabeled training set.
    $N_{dev}$: The size of the labeled dev set.
    $N_{test}$: The size of the held-out test set.
    }
    \label{table:stats}
\end{table}

Table~\ref{table:stats} provides details on train/dev/test splits for each
dataset, as well as statistics about the positive class proportion and the
number of labeling functions.
Additional details about each dataset are provided below.

\paragraph{\spam}
We use the dataset as provided by
Snorkel\footnote{https://www.snorkel.org/use-cases/01-spam-tutorial} and those
train/dev/test splits.

\paragraph{\weather, \spouse}
These datasets are used in~\cite{Ratner18} and~\cite{fu2020fast} for
evaluation, and we use the train/dev/test splits from
those works (\weather\ is called \textbf{Crowd} in that work).

\paragraph{\basketball}
This dataset is a subset of ActivityNet and was used for evaluation
in~\cite{sala2019multiresws} and~\cite{fu2020fast}.
We use the train/dev/test splits from those works.

\paragraph{\commercial}
We use the dataset from~\cite{fu2019rekall, hong2021analysis} and~\cite{fu2020fast} and the
train/dev/test splits from those works.

\paragraph{\tennis}
We use the dataset from~\cite{fu2020fast} and the train/dev/test splits from
those works.

\subsection{Supervision Sources}
\label{sec:supp_details_lfs}

Supervision sources are expressed as short Python functions.
Each source relied on different information to assign noisy labels:

\paragraph{\spam, \weather, \spouse}
For these tasks, we used the same supervision sources as used in previous
work~\citep{Ratner18, fu2020fast}.
These are all text classification tasks, so they rely on text-based heuristics
such as the presence or absence of certain words, or particular regex patterns.

\paragraph{\basketball, \commercial, \tennis}
Again, we use sources from previous work~\citep{sala2019multiresws, fu2020fast}.
For \basketball, these sources rely on an off-the-shelf object detector to
detect balls or people, and use heuristics based on the average pixel of the
detected ball or distance between the ball and person to determine whether the
sport being played is basketball or not.
For \commercial, there is a strong signal for the presence or absence of
commercials in pixel histograms and the text; in particular, commercials are
book-ended on either side by sequences of black frames, and commercial segments
tend to have mixed-case or missing transcripts (whereas news segments are in
all caps).
For \tennis, we use an off-the-shelf pose detector to provide primitives for the
weak supervision sources.
The supervision sources are heuristics based on the number of people on court
and their positions.
Additional supervision sources use color histograms of the frames (i.e., 
how green the frame is, or whether there are enough white pixels for the court
markings to be shown).

\subsection{Setting $r_i$ and $s$}
\label{sec:supp_details_thresholds}

We tune $r_i$ using the dev set in two steps.
First, we set all the $r_i$ to the same value $r$ and use grid search over $r$.
Then, we perform a series of small per-coordinate searches for a subset of the
labeling functions to optimize individual $r_i$ values.
For labeling functions with full coverage, we set the threshold to have no
extensions.

Tuning $s$ is done independently from $r_i$. Once we have the best performing $r_i$ values, we search for the best possible $s$ from one to ten. We obtain the partition by performing K-means clustering with $K = s$.

Now we report thresholds in terms of \textit{cosine similarities} (note that
this is a different presentation than in terms of distances).
For \spam, all thresholds are set to $0.844$, except for weak sources $1$, $2$, and $7$, which
have thresholds $0.864$, $0.854$ and $0.804$ respectively. The best $s$ is 2.
For \weather, all thresholds are set to $0.2$, and the best $s$ is 3.
For \spouse, all thresholds are set to $0.9275$, except for weak sources $2$ and $3$, which have thresholds $0.8385$ and $0.9$. The best $s$ is 8.
For \basketball, thresholds are set to $[0.42, 0.97, 0.52, 0.42]$ and $s$ is set to 2.
For \commercial, thresholds are set to $ [.6, .35, .35, .65]$ and $s$ is set to $3$.
For \tennis, thresholds are set to $[0.11, 0.11 0.11, 0.85, 0.11, 0.11]$ and $s$ is set to $2$.

In our experimentss, class balance $\Pr(y | C_j)$ is estimated from the dev set.

\subsection{Adapters}
We describe adapter experimental details in the main results. For each dataset, we train single-layer adapters with gradient descent. Because this requires training labels, we consider two training setups: (1) splitting the validation set into a new 80\% training set and 20\% held-out validation set, and (2) using weak-supervision methods (WS-LM) combined with labeling functions to generate pseudolabels for the training data. 

For both, we train adapters using the OpenAI GPT-3 Ada embeddings for NLP tasks and OpenAI CLIP embeddings for video tasks.. We train with 50 epochs and early stopping, and sweep over the following hyperparameters: learning rate $\in \{1e-3, 1e-2, 1e-1\}$,
weight decay $\in \{5e-4, 0\}$,
momentum $\in \{0, 0.9\}$.

The best performing model (based on held-out validation set accuracy for Spam and Weather datasets, held-out validation F1-score for all other datasets), was then evaluated on the test set. 

For the linear models, the best hyperparameters are as follows: for \spam, we use $1e-1$ learning rate, $5e-4$ weight decay, and $0.9$ momentum. For \weather, we use $1e-1$ learning rate, $5e-4$ weight decay, and $0.9$ momentum. For \spouse, we use $1e-2$ learning rate, $5e-4$ weight decay, and $0$ momentum. For \basketball, we use $1e-3$ learning rate, $0$ weight decay, and $0.9$ momentum. For \commercial, we use $1e-1$ learning rate, $5e-4$ weight decay, and $0$ momentum. For \tennis, we use $1e-1$ learning rate, $5e-4$ weight decay, and $0$ momentum.

For the MLPs, the best hyperparameters are as follows: for \spam, we use $0.1$ learning rate, $0$ weight decay, $0.9$ momentum, and $512$ hidden layer dimension. For \weather, we use $1e-1$ learning rate, $5e-4$ weight decay, $0.9$ momentum, and $256$ hidden layer dimension. For \spouse, we use $1e-3$ learning rate, $0$ weight decay, $0$ momentum, and $256$ hidden layer dimension. For \basketball, we use $1e-2$ learning rate, $0$ weight decay, $0.9$ momentum, and $512$ hidden layer dimension. For \commercial, we use $1e-1$ learning rate, $5e-4$ weight decay, $0.9$ momentum, and $512$ hidden layer dimension. For \tennis, we use $1e-2$ learning rate, $5e-4$ weight decay, $0.9$ momentum, and $256$ hidden layer dimension.

\paragraph{\sysname-Adapter}
In addition to evaluating \sysname\ on its own against linear adapters, we also demonstrate further boosts when combining the \sysname\ predictions with Adapters. For this approach, we first create training sets by combining the 80\% split of the original validation set and the original training set. To get labels, we use the ground-truth labels for the former, and the \sysname\ predictions on the training set for the latter. To get data inputs, we tune between using the same data embeddings as in the original datasets, and optionally concatenating the \sysname\ predictions as an additional input dimension to the embeddings. In the setup, for validation and test sets, we also concatenate the \sysname\ predictions to the embeddings. For the \spouse\ dataset, we do this concatenation, as we found it to improve the validation set F1-score. For all others, we use the original embeddings.
When the weak labels are not very accurate ($<75\%$ accuracy on dev), we downsample the train points (otherwise they would degrade performance from ground-truth dev labels).
This allows performance on \basketball\ to be strong even though \sysname\ accuracy is relatively low.

We tune hyperparameters in the same way as the other adapters. The best hyperparameters are as follows: for \spam, we use $0.1$ learning rate, $0$ weight decay, $0.9$ momentum. For \weather, we use $0.1$ learning rate, $5e-4$ weight decay, $0.9$ momentum. For \spouse, we use $1e-3$ learning rate, $5e-4$ weight decay, $0.9$ momentum. For \basketball, we use $10.1$ learning rate, $5e-4$ weight decay, $0$ momentum. For \commercial, we use $0.1$ learning rate, $0$ weight decay, $0.9$. For \tennis, we use $1e-1$ learning rate, $5e-4$ weight decay, $0$ momentum.

\section{Additional Experimental Results}\label{sec:supp_exp}

\subsection{MLP Adapters}
%!TEX root = ../main.tex

\begin{table}[h!]
    \centering
    \begin{tabular}{lcc}
    \toprule                                                
    \textbf{Task}  & \textbf{\sysname\ ($s$)}  \\
    \midrule                                                 
    \spam          & \textbf{96.8} (2)        \\
    \weather       & \textbf{95.3} (3)        \\
    \spouse        & \textbf{17.0} (6)       \\
    \midrule                                                   
    \basketball    & \textbf{81.7} (2)         \\
    \commercial    & \textbf{93.4} (3)      \\
    \tennis        & \textbf{83.4} (1)       \\
    \bottomrule
    \end{tabular}

    \caption{MLP Adapter performance. Scores are in F1, except for Spam and Weather (accuracy).}
    \label{table:mlp_adapter}
\end{table}

We also evaluated adapters using 3-layer MLPs as alternatives to the linear adapters. We considered MLPs with 512 or 256 dimensional hidden-layers with the ReLU nonlinear activation function. We report the results in Table \ref{table:mlp_adapter}.
Performance is similar to the linear adapters, but the MLP adapters are slightly more expensive to train.
We focus on a simple linear probe for \sysname-Adapter and the main experiments for simplicity.

\subsection{Additional Measures of Smoothness}\label{sec:supp_smooth}

%!TEX root = ../main.tex

\begin{figure*}[t]
    \centering
    \begin{minipage}{2in}
        \begin{flushright}
            \small
            \begin{tabular}{lr}    
                \toprule                                                       
                \textbf{Embedding} & F1-score \\
                \midrule         
                Raw pixel          & 19.3  \\
                RN-101        & 31.1\\
                BiT-M          & 42.5  \\
                \textbf{CLIP}            & \textbf{69.6}  \\
                \bottomrule
            \end{tabular}
    \end{flushright}
    \end{minipage}
    \begin{minipage}{4.5in}
        \centering
        \includegraphics[width=4.5in]{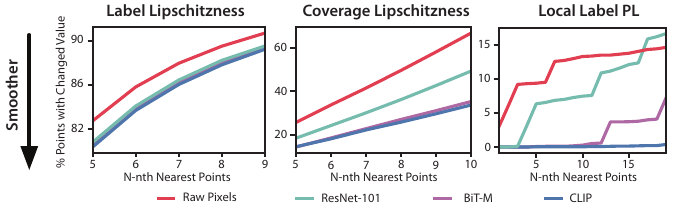}
    \end{minipage}
    \begin{minipage}{2in}
        \begin{flushright}
            \small
            \begin{tabular}{lr}    
                \toprule                                                     
                \textbf{Prompting} & F1-score \\
                \midrule
                No Prompt          & 48.5  \\
                Prompt at Beginning          & 50.2  \\
                \textbf{Prompt at End}          & \textbf{52.2} \\
                \bottomrule
            \end{tabular}
        \end{flushright}
    \end{minipage}
    \begin{minipage}{4.5in}
        \centering
        \includegraphics[width=4.5in]{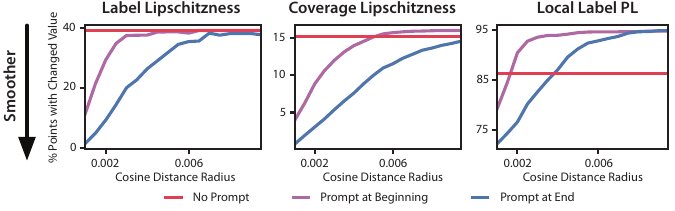}
    \end{minipage}
    \caption{
        Top: \sysname\ performance and smoothness measurements of CLIP, BiT-M, ResNet-101, and raw pixels as embeddings for \basketball.
        Bottom: \sysname\ performance and smoothness measurements of no prompting, prompting at beginning, and prompting at end in GPT-3 for \spouse.
    }
    \label{smoothness_all}
\end{figure*}

Figure~\ref{smoothness_all} reports two additional measurements of smoothness on \basketball\ and \spouse---coverage Lipschitzness and local label probabilistic Lipschitzness (see Section~\ref{sec:theory} for the formal definitions). Trends match label Lipschitzness.

To measure label Lipschitzness, the property that $|\Pr(y = 1 | x) - \Pr(y = 1 | x')| \le K_y \rho(f(x), f(x'))$, we observe that
\begin{align*}
|\Pr(y = 1 | x) - \Pr(y = 1 | x')| &= | \E{}{\ind{y = 1 | x} - \E{}{\ind{y = 1 | x'}}} \\
&\le \E{}{|\ind{y = 1 | x} - \ind{y = 1 | x'} |} \\
&= \E{}{\ind{y \neq y'}} = \Pr(y \neq y')
\end{align*}

by Jensen's inequality. Therefore, we estimate $\Pr(y = y')$ on data as an upper bound on label Lipschitzness. We do this by computing the average percentage of points in some local region (defined either by a radius or by nearest neighbors) around a given point where the label is different from that of the given point.

For source Lipschitzness, the sources in practice are unimodal and hence $K_{\lf} = 0$. 

For coverage Lipschitzness, we note that $|\Pr(\lf_i \neq 0 | x) - \Pr(\lf_i \neq 0 | x')| \le \Pr(\ind{\lf_i(x) \neq 0} != \ind{\lf_i(x) \neq 0})$, so we estimate this probability on data as an upper bound. This is done by computing the average percentage of points that abstain in some local region around a point that has coverage, and vice versa. We average over all sources.

Finally, for local label probabilistic Lipschitzness, we follow Definition~\ref{def:pl}. For each point in the support of $\lf_i$, we search if there exists a nearby point within radius $r$ (or $k$-th nearest neighbor) such that this nearby point is not in the support and has a label differerent from that of the given point. We compute the percentage of points in the support that satisfy this property. We average over all sources.
 
To read $K_y, K_{\lf, 0}, M$ from Figure~\ref{smoothness_all}, they can each be viewed as the slope of the linear function that upper bounds the smoothness curve. Note that for the curves that appear flat (i.e. no prompt), these constants are very large, as there is an initial sharp increase in the percentage of points with changed value.

\subsection{Synthetic Experiments} \label{sec:supp_exp_synthetics}

We evaluate \sysname\ on synthetic data to confirm our insights about 1) how generalization error for $\hat{\Pr}(y | \bm{\lf}, x)$ demonstrates a bias-variance tradeoff depending on the number of partitions, and 2) how additional lift depends on setting the threshold radius based on the original weak source's accuracy and the embedding's probabilistic Lipschitzness. 

\begin{figure}[t]
  \centering
  \includegraphics[width=3in]{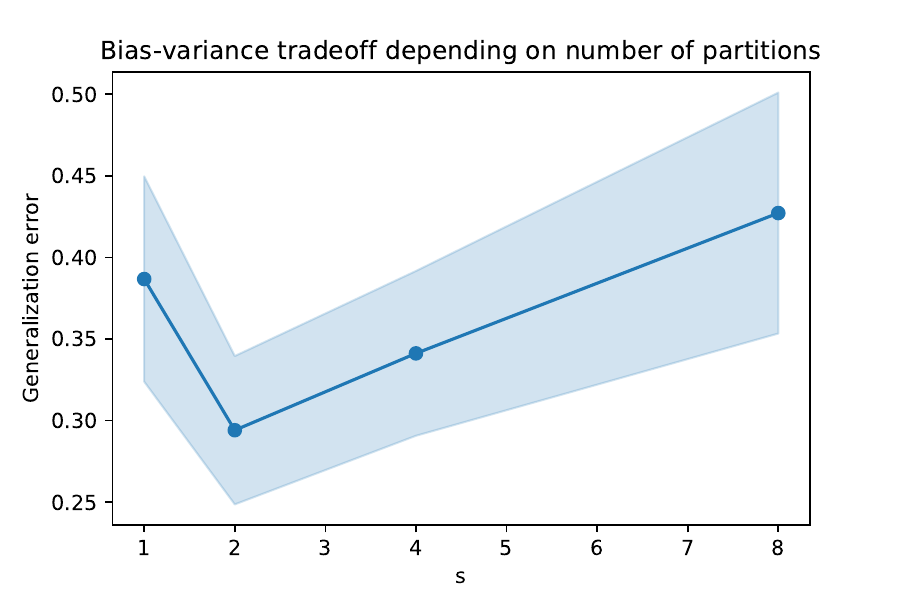}
  \caption{
    The bias-variance tradeoff in the generalization error based on $s$, the number of partitions used in our approach. When too few partitions are used, the accuracy estimates are not fine-grained and do not sufficiently approximate the true conditional distribution $\Pr(y, \bm{\lf} | x)$, resulting in large bias. When too many partitions are used, the variance increases due to sampling error on individual partitions. 
  }
  \label{fig:partition_synthetic}
\end{figure}

First, we conduct a synthetic experiment to understand how the number of partitions $s$ controls the bias-variance tradeoff in generalization error of $\hat{\Pr}(y | \bm{\lf}, x)$ (Theorem~\ref{thm:gen_err}. 
We generate two sets of canonical parameters and use them in~\eqref{eq:pgm} to generate $(y, \bm{\lf})$ from two different distributions, $\p_1$ and $\p_2$ over an embedding space. We generate $1000$ points each for $\p_1$ and $\p_2$ to form datasets $\D_1$ and $\D_2$, which are then concatenated to form a dataset $\D$ of $2000$ points. 
We first run Algorithm~\ref{alg:main} with $s = 1$, which means that we estimate only one set of parameters over $\D$ despite the dataset consisting of two different conditional distributions.
We then set $s = 2$ and estimate the parameters of $\p_1$ and $\p_2$ separately over $1000$ points each.
Finally, we set $s = 4$ and $s = 8$ by dividing each of $\D_1$ and $\D_2$ into $2$ subsets of $500$ points and $4$ subsets of $250$ points, respectively.
For each of these, we compute the average cross-entropy loss (over $s$) of our label model. 
Figure~\ref{fig:partition_synthetic} plots how the generalization error changes with the number of partitions $s$. We plot the mean and $95\%$ confidence interval over ten random initializations of canonical parameters and datasets drawn according to them.
It demonstrates a bias-variance tradeoff: when $s = 1$, we estimate one set of parameters over the entire dataset rather than the two true sets of parameters, and this approach hence does not capture the distinctions in input space among the source accuracies. 
As a result, a low $s$ results in high bias, contributing to large generalization error.
On the other hand, when $s = 4$ or $8$, our approach is correctly estimating $\D_1$ and $\D_2$ separately but is using much less data to do so. 
This approach has higher sampling error, which worsens variance and contributes to large generalization error.

\begin{figure}[t]
  \centering
  \includegraphics[width=4in]{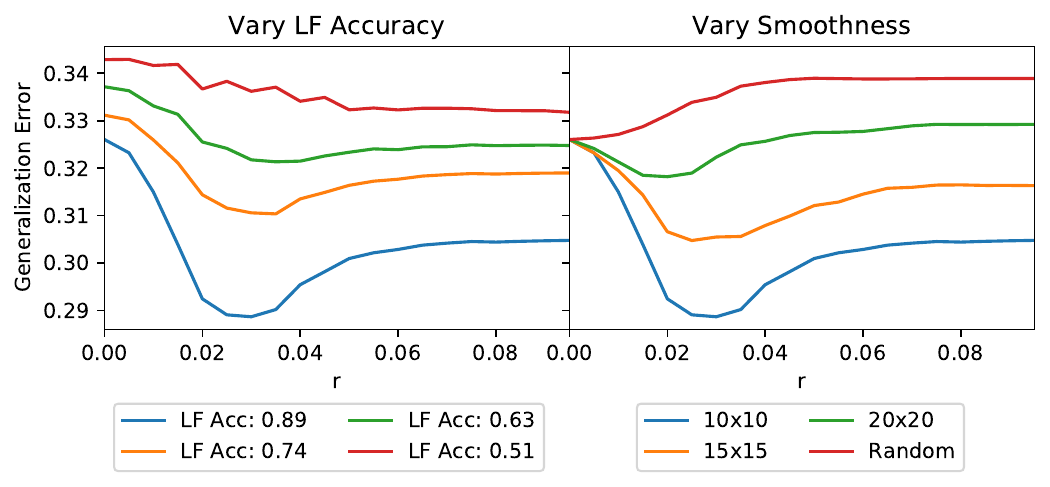}
  \caption{
    Reduction in generalization error from extending labeling functions of
    varying accuracies (left), and on
    embedding spaces of varying smoothness (right). LF refers to a weak source's labeling 	    function.
  }
  \label{fig:extension_synthetic}
\end{figure}

Next, we conduct a synthetic experiment to understand how setting the threshold radius of the extended weak source controls improvement in generalization error as a function of the original average source accuracy and the probabilistic Lipschitzness of the FM embedding (Theorem~\ref{thm:lift}). 
Suppose for simplicity that $s = 1$ and that $\Pr(y, \bm{\lf})$ is modeled the same way as $\Pr(y, \bm{\lf} | x)$ in~\eqref{eq:pgm}. This assumption reduces to previous weak supervision settings but allows us to isolate the effect of extending a source.
We create an embedding space over $10000$ uniformly sampled points in $[0, 1]^2$ with a fixed class balance $\Pr(y)$ and $m = 3$ labeling functions, where only $\lf_1$ is extended. 
To understand the impact of a labeling function's accuracy, we fix a task distribution by assigning $Y$ labels in a $10 \times 10$ ``checkerboard'' pattern and run our algorithm on four versions of $\lf_1$ with varying average accuracies, keeping $\lf_1$'s support consistent. 
In Figure \ref{fig:extension_synthetic} (left), we extend $\lf_1$ based on $r$ for each of the four versions of the labeling function.
This confirms that extending a highly accurate labeling function results in greater generalization lift. 
To understand the impact of Lipschitzness of the task distribution, we produce four distributions of $Y$ over the embedding space, three of which follow a checkerboard clustering pattern (such that more divisions mean less smoothness), and one that spatially distributes the values of $Y$ at random. 
For both experiments, we run our approach with threshold radius varying from $0$ to $0.1$ in increments of $0.005$.
In Figure \ref{fig:extension_synthetic} (right), each curve represents performance of the same high average accuracy labeling function $(a_1 = 0.89$) over embeddings of varying Lipschitzness. 
This confirms that the greatest improvement due to an extension occurs for the smoothest embedding. 
Lastly, both of these graphs illustrate the tradeoff in setting a threshold radius, confirming the theoretical insight that this quantity must be chosen carefully to ensure lift from using $\lfbar_1$ over $\lf_1$.

\end{document}